\documentclass{article} 
\usepackage{iclr2026_conference,times}

\usepackage{amsmath,amsfonts,bm}

\def\eqref#1{equation~\ref{#1}}

\def\1{\bm{1}}

\def\vk{{\bm{k}}}

\def\vv{{\bm{v}}}

\DeclareMathAlphabet{\mathsfit}{\encodingdefault}{\sfdefault}{m}{sl}
\SetMathAlphabet{\mathsfit}{bold}{\encodingdefault}{\sfdefault}{bx}{n}

\newcommand{\E}{\mathbb{E}}

\newcommand{\R}{\mathbb{R}}

\usepackage{hyperref}
\usepackage{url}
\usepackage{graphicx}
\usepackage{booktabs}
\usepackage{multirow}
\usepackage{multicol}
\usepackage{wrapfig}
\usepackage{float}
\usepackage{listings}
\usepackage{xcolor}

\usepackage{amsthm}

\newtheorem{theorem}{Theorem}
\newtheorem{lemma}{Lemma}

\newtheorem{remark}{Remark}

\usepackage{amsmath}

\newcommand{\uu}{\boldsymbol{u}}
\let\vv\undefined
\newcommand{\vv}{\boldsymbol{v}}
\let\vk\undefined
\newcommand{\vk}{\boldsymbol{v}^{(k)}}
\let\X\undefined
\newcommand{\X}{\boldsymbol{X}}
\newcommand{\x}{\boldsymbol{x}}
\newcommand{\I}{\boldsymbol{I}}
\newcommand{\Rm}{\boldsymbol{R}}
\newcommand{\G}{\boldsymbol{G}}

\let\E\undefined
\newcommand{\E}{\boldsymbol{E}}
\let\r\undefined
\newcommand{\r}{\mathcal{R}}

\newtheorem{innercustomthm}{Theorem}
\newenvironment{customthm}[1]
  {\renewcommand\theinnercustomthm{#1}\innercustomthm}
  {\endinnercustomthm}

\newtheorem{innercustomlemma}{Lemma}
\newenvironment{customlemma}[1]
  {\renewcommand\theinnercustomlemma{#1}\innercustomlemma}
  {\endinnercustomlemma}

\title{Frayed RoPE and Long Inputs:\\A Geometric Perspective}

\author{Davis Wertheimer, Aozhong Zhang*, Derrick Liu, Penghang Yin*, Naigang Wang\\
IBM Research; *University at Albany, SUNY\\
\texttt{\{davis.wertheimer,dliu\}@ibm.com, nwang@us.ibm.com}\\
*\texttt{\{azhang3,pyin\}@albany.edu}\\
}

\newcommand\sref{\S~\ref}

\iclrfinalcopy 
\begin{document}

\maketitle

\begin{abstract}
Rotary Positional Embedding (RoPE) is a widely adopted technique for encoding position in language models, which, while effective, causes performance breakdown when input length exceeds training length. Prior analyses assert (rightly) that long inputs cause channels to rotate ``out of distribution,'' but it is not clear how extra rotation relates to or causes pathological behavior. Through empirical and theoretical analysis we advance a unified geometric understanding of attention behavior with RoPE. We find that attention induces tight clustering of separated key and query latent point clouds, allowing for creation of sink tokens: placeholders that allow attention heads to avoid token mixing when not required. RoPE applied to longer inputs damages this key/query cluster separation, producing pathological behavior by inhibiting sink token functionality. From this geometric perspective, we propose \name (In Distribution), a straightforward modification that allows attention layers to generalize to longer inputs out of the box: apply RoPE with high frequency to a subset of channels. We demonstrate the effectiveness of \name for extended inputs using 1B and 3B parameter Transformers on the LongBench and RULER information retrieval benchmarks. 
\end{abstract}

\section{Introduction}

Transformer models form the backbone of modern large language models (LLMs), enabling them to capture complex dependencies across long sequences. The attention mechanism in transformers maps inputs into queries, keys, and values: queries and keys determine token relevance through similarity scores, while values provide the content to be aggregated. This separation allows the model to learn both where to attend and what information to extract, producing context-aware representations that drive the success of transformers in natural language processing and beyond.

To enhance the interaction between queries and keys, positional encoding is used to distinguish token order, constituting a fundamental component of transformer design. 
Rotary Positional Embedding (RoPE)~\citep{su2023roformerenhancedtransformerrotary} has emerged as the predominant approach and is now implemented in most state-of-the-art LLMs, including LLaMA~\citep{grattafiori2024llama3herdmodels}, GPT~\citep{openai2025gptoss120bgptoss20bmodel}, and DeepSeek~\citep{deepseekai2025deepseekv3technicalreport}. However, a key limitation of RoPE is performance degradation when input length exceeds training context. Most attempts to analyze and address this issue attribute the failure to channels rotating ``out of distribution,'' leading to frequency rescaling as a workaround~\citep{chen2023extendingcontextwindowlarge, NTK-by-parts, NTK-Aware, peng2023yarnefficientcontextwindow,ding2024longrope}.

Another important phenomenon in transformers is the attention sink, which has been shown to influence long-context generalization~\citep{xiao2023efficient}. The attention sink, typically the first input token, possesses little semantic meaning but consistently large attention scores. Its presence is considered crucial to prevent over-mixing of information, and empirical evidence shows that attention sinks must be preserved when extending context lengths~\citep{xiao2023efficient,han2024lminfinitezeroshotextremelength}.

The relationship between attention, RoPE, and attention sinks -- three seemingly disconnected concepts -- is the focus of this paper. We propose a unified geometric perspective, based on analysis of popular LLM families including LLaMA, Gemma~\citep{team2024gemma}, and Olmo~\citep{groeneveld2024olmo}. 
We find that, contrary to the common intuition of attention as a soft nearest-neighbor lookup, queries and keys form tight clusters with minimal overlap, while the sink token resides near the origin (Fig.~\ref{fig:cones}, left). Within the training context length, this separation ensures that the sink token, with its small norm, naturally absorbs the majority of attention weight. Beyond the training context, RoPE disperses and overlaps the query/key clusters. This geometric disruption prevents the sink token from functioning, leading to long-context performance breakdowns (Fig.~\ref{fig:cones}, right).

    \begin{wrapfigure}{R}{.5\textwidth}
    \centering
    \vspace{-6pt}
    \includegraphics[width=.9\linewidth]{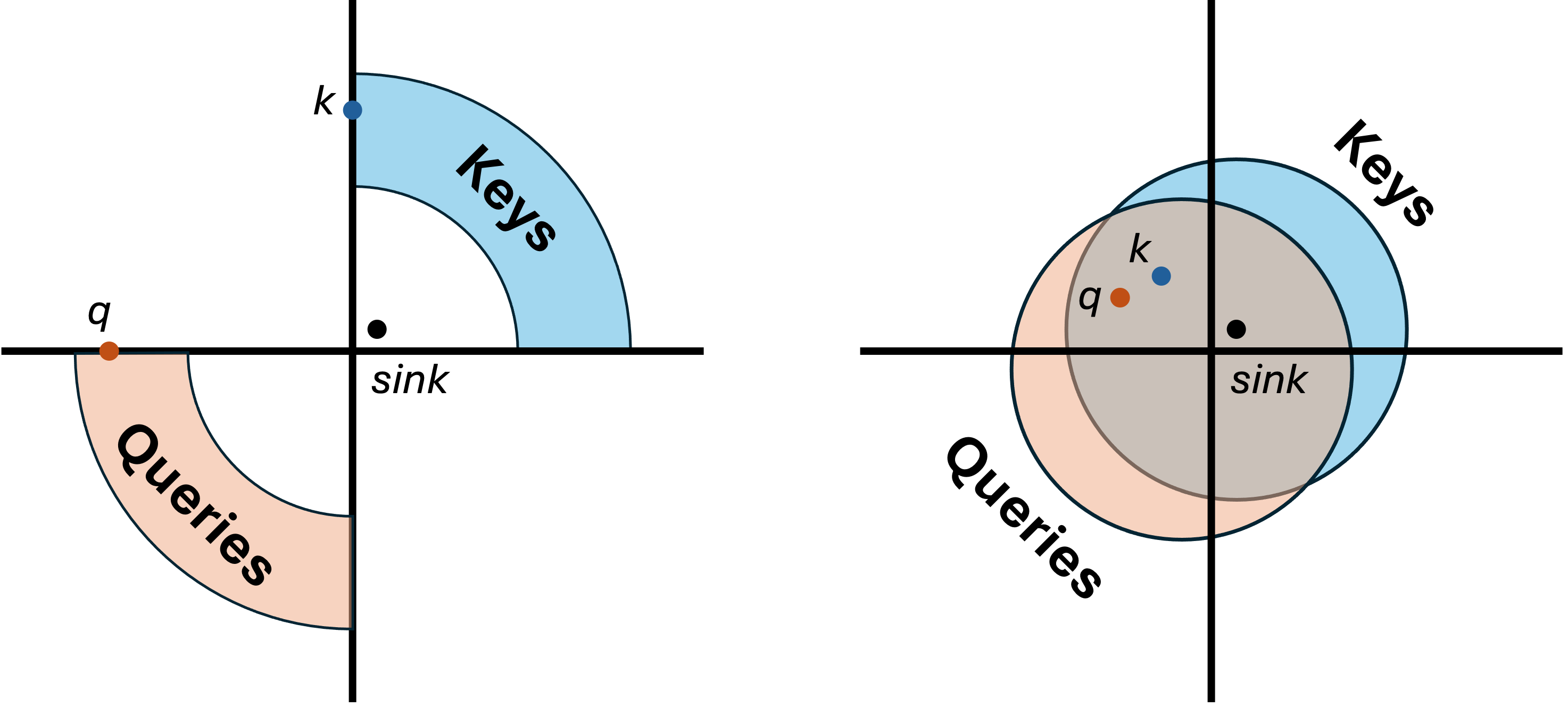}
    \vspace{-10pt}
    \caption{A 2D diagram of our observed latent geometry. \textbf{Left:} Keys/queries cluster tightly into opposing point clouds with negative dot products. The sink token has low norm and thus the greatest dot product. Matched key/query pair $q,k$ align orthogonally, letting their product approach and exceed the sink's. \textbf{Right:} RoPE on long inputs makes keys/queries disperse and overlap, causing spurious alignment. Sink token stops functioning.}
    \label{fig:cones}
    \vspace{-10pt}
    \end{wrapfigure}


Based on this analysis, we identify two necessary criteria for length generalization with RoPE: 1) a lower bound on cluster overlap, and 2) attaining this bound within the training length. The former ensures continued sink token functionality in the limit, while the latter prevents out-of-distribution errors. 
We propose \name (RoPE In Distribution) as a simple yet effective example for how to fulfill these criteria. 
A plug-in replacement for RoPE,
\name applies high-frequency rotation to a fraction of channels in each head, leaving the rest unchanged.
RoPE-free channels maintain separated key/query clusters across length (criteria 1), while high-frequency rotation ensures that in subspaces where clusters merge, they do so within the training length (criteria 2), facilitating length generalization.
We validate \name on LongBench and RULER using trained 1B- and 3B-parameter decoders, demonstrating strong context length generalization and improvements over prior tuning-free methods.





\section{Background and Related Work}

  \textbf{Position Encoding}
    techniques can be broadly divided into two categories: absolute position embeddings (APE) and relative position embeddings (RPE). APE directly injects position information into latent representations using token indices, in a fixed sinusoidal~\citep{vaswani2023attentionneed} or learnable form~\citep{devlin2019bertpretrainingdeepbidirectional}. APE exhibits limited generalization beyond the training context length, which RPE addresses by encoding distances between token pairs rather than their absolute indices. Notable RPEs include T5's relative bias~\citep{raffel2023exploringlimitstransferlearning} and Alibi~\citep{press2022trainshorttestlong}, which add distance-based biases to attention logits, and RoPE (Rotary Position Embedding)~\citep{su2023roformerenhancedtransformerrotary}, which encodes relative distance as latent angular displacement, now the de facto standard for LLMs. 

    The key insight behind RoPE is that relative position, through properties of rotation, decomposes into independent key and query transformations. RoPE encodes relative position via angular displacement proportional to token distance, interposing a block-diagonal matrix of $2\times2$ rotations into the key/query dot product. Each submatrix has a constant frequency $\theta$ scaling token distance $m$: 
    \setlength{\arraycolsep}{2pt}
    \begin{equation}    
	\mathbf{R}^m_{\Theta} = 
    \text{Diag}\left(
    \begin{bmatrix}
    \cos{m\theta_1}& -\sin{m\theta_1}\\
    \sin{m\theta_1}&\cos{m\theta_1}
    \end{bmatrix}
    ,\cdots,
    \begin{bmatrix}
    \cos{m\theta_{d/2}}& -\sin{m\theta_{d/2}}\\
    \sin{m\theta_{d/2}}&\cos{m\theta_{d/2}}
    \end{bmatrix}
    \right)
	\label{fn:rope-RMat}
    \end{equation}
    In practice, this decomposes into independent rotations on query $q_i$ and key $k_j$ at positions $i,j$:
    \begin{equation}
        \text{RoPE}(\langle q_i,k_j\rangle)
        =q_iR^{i-j}_\Theta k_j^\top
        = q_iR^i_\Theta R^{-j}_\Theta k_j^\top
        = q_iR^i_\Theta (k_jR^j_\Theta)^\top 
        =\langle r(q_i,i),r(k_j,j)\rangle 
    \end{equation}
    where $r(\cdot,i)$ represents rotation by $\mathbf{R}_\Theta^i$. Understanding the impact of this progressive rotation on latent keys and queries is crucial to understanding out-of-distribution failures on long contexts.
    
    

  \textbf{Context Length Extension:}
 As LLMs are increasingly applied to long-context tasks, substantial research has focused on extending their usable sequence lengths without retraining. While RPEs such as RoPE are designed to improve long-context generalization, performance still degrades beyond the training length. To address this, Position Interpolation (PI)~\citep{chen2023extendingcontextwindowlarge} linearly interpolates position indices within the pre-trained sequence length. NTK-by-parts~\citep{NTK-by-parts} and NTK-aware~\citep{NTK-Aware} introduce nonlinear interpolation schemes inspired by Neural Tangent Kernel dynamics, 
 scaling RoPE frequencies in groups and outperforming simple PI.
 YaRN (Yet Another RoPE Extension)~\citep{peng2023yarnefficientcontextwindow} further integrates previous NTK-based approaches with temperature scaling on attention logits.
 More recently, LongRoPE~\citep{ding2024longrope} employs evolutionary search to optimize the frequency rescale factors for each dimension. LongRoPE extends context window to beyond 2 million tokens, albeit with multi-step long-context fine-tuning. Here we focus on tuning-free generalization and leave tuning to future work.

 Some studies find that RoPE's low-frequency components induce high-norm semantic bands, which become unstable in long contexts~\citep{barbero2024round} or hinder the encoding of semantic information~\citep{chen2024hopenovelpositionalencoding}. They propose limiting RoPE to a subset of channels, finding this improves performance. Our analysis provides novel perspective and caveats for this technique.

  \textbf{Sink Tokens},
    or attention sinks, refer to tokens with disproportionately high attention despite a lack of semantic meaning~\citep{xiao2023efficient}. This phenomenon is widely observed in LLMs and plays a critical role in preserving model behavior, especially in long contexts~\citep{xiao2023efficient,han2024lminfinitezeroshotextremelength, yang2024seedstorymultimodallongstory}. 
    Usually the first token of a sequence
    \citep{xiao2023efficient, cancedda2024spectralfiltersdarksignals}, attention sinks have also been linked to massive activations observed across LLMs~\citep{sun2024massiveactivationslargelanguage}. \cite{gu2025attentionsinkemergeslanguage} investigate when and how attention sinks emerge during pretraining, and \cite{barbero2025llms} provide theoretical and empirical evidence that sink tokens prevent representation collapse from information over-mixing~\citep{barbero2024transformersneedglassesinformation}. Here we relate sink tokens to RoPE and attention geometry, pinpointing the sink token as the failure mechanism in long contexts.
  
\section{Analysis}
\label{sec:analysis}
  We perform a geometric analysis of attention with RoPE, showing that keys and queries cluster tightly in opposing directions, while RoPE inhibits this behavior, with clusters dispersing and overlapping over time.
  Alongside small sink token $\ell_2$ norm, these separated clusters produce a learned bias toward the sink. However, as RoPE disperses and overlaps key and query clusters, this mechanism becomes tenuous. We claim that the breakdown of transformers in long contexts is a breakdown of the sink token: past the training length, models begin attending to the wrong token(s) by default.

  Analysis is conducted on Llama3-8B-Instruct, with additional trials on Olmo-7b and Gemma-7b for verification. When not otherwise specified, the relevant model is Llama3. Input text is drawn from the Wikitext2 dataset~\citep{merity2016pointer}. Further details are provided in \ref{app:details}.
  
  \subsection{Key/Query Clustering}
  \label{sec:clusters}
    An intuitive understanding of the attention operation is that it functions as a soft nearest-neighbor lookup. A query is oriented in latent space to align with one or more contextually relevant key vectors, and the degree of alignment defines the mixing ratios for corresponding values. 
    The curse of dimensionality ensures that random IID latent points are orthogonal by default, so directional alignment in high-dimensional space is difficult. Thus we can imagine that keys and queries form overlapping point clouds around the origin. Key/query matching is accomplished by high directional alignment: activated pairs should have large, positive dot products. Keys and non-matching queries, meanwhile, should be orthogonal, with small dot products, to keep retrieval discriminative. 

    This model of attention, while intuitive, is also wrong, at least for RoPE models. Instead of overlapping clouds on the origin, keys and queries form tight clusters away from the origin, with minimal overlap. Further, such clusters are generally \textit{un}aligned directionally, with the origin sitting between the clusters. Fig.~\ref{fig:angles} shows the mean intra- and inter-cluster pairwise cosine distances ($\ell_2$-normalized dot products) for keys and queries, averaged over layers and heads. Before RoPE, intra-cluster distances (key-key and query-query), bounded to $\pm1$, are generally close to 1. Key and query point clouds are situated within a tight arc, in small clusters displaced from the origin.
    At the same time, these clusters are largely aligned \textit{against} each other, with negative mean key-query dot product in Fig.~\ref{fig:angles} (right). This paints an entirely different picture of attention: instead of overlapped point clouds, envision keys and queries in opposing quadrants. Queries avoid attending to most keys via negative dot products. If a query and key land on aligned quadrant boundaries, though, the resulting zero dot product exceeds the baseline and yields a large attention weight. Softmax shift-invariance ensures that this arrangement (negative baseline and orthogonal ``aligned'' pairs) produces identical mixing behavior to the original conception (orthogonal baseline and positive products for aligned pairs). Fig.~\ref{fig:cones} (left) illustrates the proposed geometry. In practice, key/query clusters are not \textit{directly} across the origin (dot products in Fig.~\ref{fig:angles} (right) are negative but small), but the intuition holds. 

    \begin{figure}[h!]
    \centering
    \includegraphics[width=.9\textwidth]{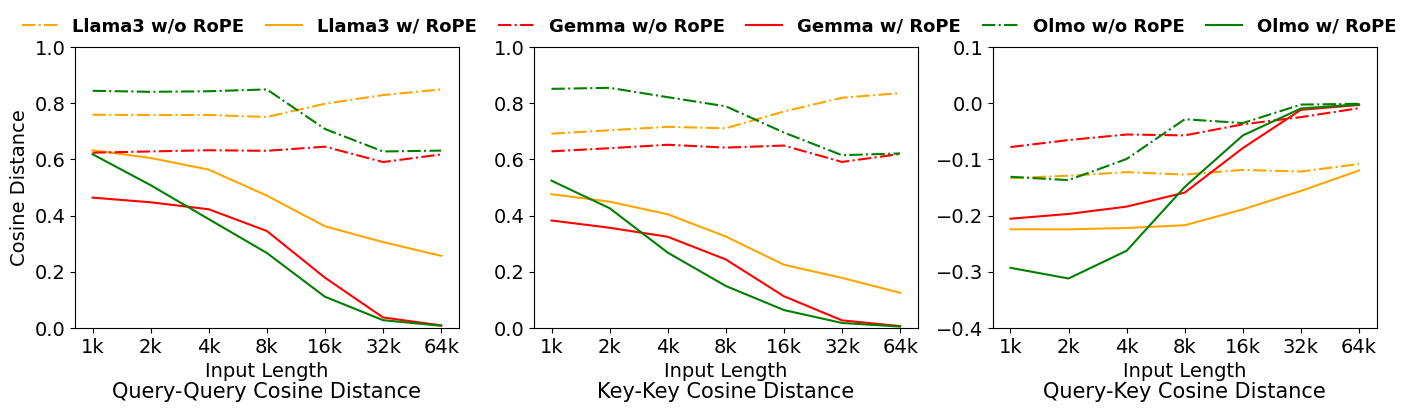}
    \vspace{-10pt}
    \caption{Effect of RoPE across context length on pairwise angular distances within heads for Llama3-8B, Gemma-7B and OLMo-7B. 
    }
    \label{fig:angles}
    \end{figure}

    \subsubsection{The Impact of RoPE on Clustering}
    RoPE sends points through a predetermined trajectory of orbits around the origin, so any tightly-clustered point cloud, displaced from the origin, should inevitably disperse. This is indeed the case for transformers: Fig.~\ref{fig:angles} shows that RoPE decreases intra-cluster alignment, with further decrease over time as points spiral further away.
    The model compensates for this by positioning key/query clusters such that RoPE misaligns them \textit{further}: key-query dot product \textit{also} decreases, and does not rise until after the training length (2k for Olmo, 8k Llama/Gemma). Meanwhile, the clusters without RoPE mostly maintain their behavior across context lengths. 
    It thus appears that RoPE weakens the clustering, but does not eliminate it until the training context length is exceeded.
    We corroborate this finding via additional distance/clustering metrics in \ref{app:clusteringmetrics}.

    Fig.~\ref{fig:pca} illustrates this behavior, by taking a PCA ``snapshot'' of the point clouds without RoPE at time $t=4096$, and applying the same projection with RoPE and at time $t=65536$ (more plots are available in \ref{app:morepca}). Prior observations are confirmed visually: in the first and third views, points form tight clusters displaced from the origin, and key and query clusters are located across from each other (with four queries per key cloud due to GQA~\citep{ainslie2023gqa}). RoPE causes clusters within training length to disperse slightly, but at length 64k eliminates cluster separation entirely (in this projection). The exact impact of this overlap in key/query clouds is discussed in \sref{sec:sinks}, but it is clear that query-key ``alignment'' works differently at length 64k compared to the other scenarios.

    \begin{figure}[h!]
    \centering
    \includegraphics[width=.9\textwidth]{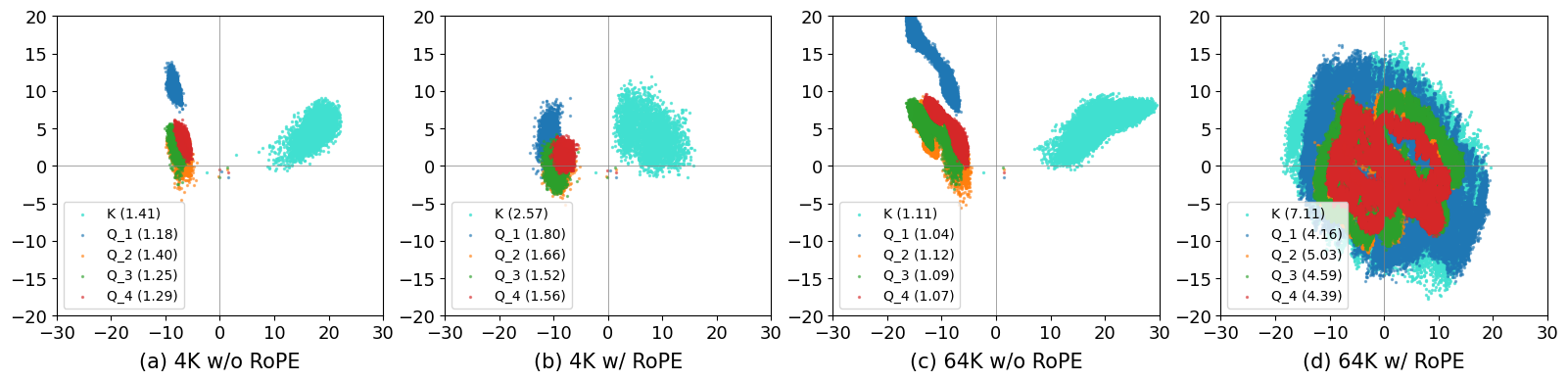}
    \vspace{-10pt}
    \caption{2D PCA projections of Llama3 representations under different context lengths and RoPE settings (3rd key head of layer 21 and its queries). RoPE at long contexts destroys cluster separation, and increases stable rank (in parentheses) with sequence length, consistent with Theorem~\ref{thm:stable_rank}.
    }
    \label{fig:pca}
    \end{figure}
    \subsubsection{A Singular Value Perspective on Clustering}
    While the visual analysis is striking, it only captures a 2D projection of a 128-dimensional latent space. Similarly, Fig.~\ref{fig:angles} reports pairwise relationships, an incomplete picture of global behavior. 
    We therefore corroborate our findings with a holistic and rigorous analysis based on singular values. 
    In an attention head, the set of key or query vectors forms a matrix $\X\in\R^{n\times d}$, where $n$ is the inference sequence length and $d$ is head width. $\X$'s singular values correspond to the principal components of the point cloud, an ordered set of directions maximizing variance along the earliest directions up to orthogonality constraints. When singular values are equal, variance is constant in all directions, and the point cloud forms a ball around the origin. Unequal values indicate uneven spread. In practice, the first singular value (FSV) of key and query clusters before RoPE is large, accounting for over $75\%$ of total cluster variance on average for Llama3. Fig.~\ref{fig:singulars} (left) plots the distribution across individual heads and layers, and the degree of variance covered by the first principal component ranges from about half to nearly all of it. Thus the majority of spread around the origin occurs in one direction: either the cluster is a long thin needle, or it is a tight cloud displaced along this direction. Given the intra-cluster dot products in Fig.~\ref{fig:angles}, the latter must be the case.

    \begin{figure}[h!]
    \centering
    \includegraphics[width=.9\textwidth]{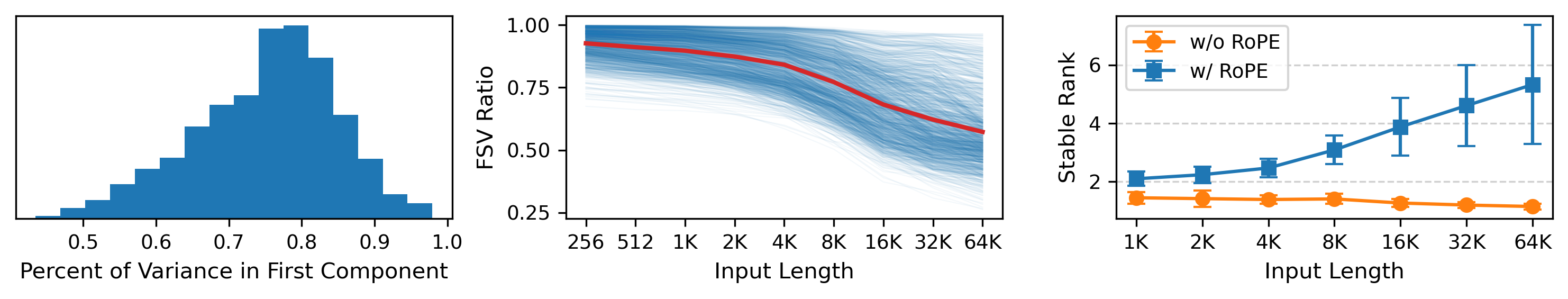}
    \vspace{-10pt}
    \caption{\textbf{Left}: Histogram across layers and heads showing the percentage of variance (relative to origin) explained by the first principal component of latent key/query clusters. 
    \textbf{Middle}: Ratio of first singular value (FSV) after and before RoPE. Blue lines plot individual key/query heads, red plots the average trend. RoPE shrinks the FSV, but accelerates beyond the training length, producing cluster dispersal.
    \textbf{Right}: Mean stable rank of key clusters by input length, showing monotonic increase with RoPE, indicating cluster dispersal. Error bars denote standard deviation.
    }
    \label{fig:singulars}
    \end{figure}

    When RoPE is applied, we expect principal components to skew more evenly: RoPE throws channel pairs through decorrelated rotations (ensured by irrational frequency ratios), so in the limit, a point cloud under RoPE maps to a shell of IID points orbiting the origin. We can formalize and justify this intuition: FSV (i.e., the spectral norm) ratio under RoPE, $\frac{\|\r(\X)\|_2}{\|\X\|_2}$, shrinks as the sequence length $n$ increases, where $\mathcal{R}$ denotes the RoPE action. The following informal result provides an asymptotic estimate of this ratio; the rigorous statement and proof are deferred to \ref{app:theory}.
    


\begin{lemma}\label{lem:spectral}
Assume the key/query matrix $\X = \uu\vv^\top$, where $\uu\in\R^n$, $\vv\in\R^d$ with $\|\vv\|_2 = 1$. Then as the inference sequence length $n\to \infty$, under mild conditions on $\uu$, applying RoPE to $\X$ shrinks its spectral norm (or FSV) by at least a factor of $\sqrt{2}$, and by up to a factor of $\sqrt{d}$.
\end{lemma}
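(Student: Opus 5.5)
We work directly with the rows of the rotated matrix. Row $i$ of $\r(\X)$ is $u_i\,\Rm_i\vv$, where $\Rm_i=\mathbf{R}^i_\Theta$ is block diagonal with $2\times 2$ rotations by angles $i\theta_j$. Since $\|\X\|_2=\|\uu\|_2\|\vv\|_2=\|\uu\|_2$ and $\|\r(\X)\|_F=\|\X\|_F=\|\uu\|_2$ (rotations preserve row norms), the ratio equals $\|\r(\X)\|_2/\|\uu\|_2$.

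We study the Gram matrix
\[
\G_n=\frac{1}{\|\uu\|_2^2}\r(\X)^\top\r(\X)=\sum_{i=1}^n w_i\,\Rm_i^\top\vv\vv^\top\Rm_i,\qquad w_i=\frac{u_i^2}{\|\uu\|_2^2}.
\]
The squared ratio is $\lambda_{\max}(\G_n)$. We split $\vv$ into its $2$-dimensional blocks $\vv_j\in\R^2$, so $\sum_j\|\vv_j\|^2=1$. The $(j,l)$ block of $\G_n$ is $\sum_i w_i\,R(i\theta_j)^\top\vv_j\vv_l^\top R(i\theta_l)$.

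The ``mild conditions on $\uu$'' are chosen to make these weighted averages equidistribute. Concretely, we assume $\max_i w_i\to 0$ (no single token dominates) and a Weyl-type condition: for every nonzero frequency $\omega$ in the finite set $\{\pm\theta_j\pm\theta_l\}$ and $\{2\theta_j\}$, we have $\sum_i w_i e^{\mathrm{i}\,i\omega}\to 0$. This holds, for example, for bounded-variation or slowly varying weights, since the $\theta_j$ are distinct and have irrational ratios to $\pi$ and to each other.

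Expanding each block with product-to-sum identities, every entry is a constant plus terms of the form $\sum_i w_i\cos(i\omega)$ or $\sum_i w_i\sin(i\omega)$ with $\omega\neq 0$. The off-diagonal blocks ($j\neq l$) therefore vanish. The diagonal block $j$ tends to the rotation average of $\vv_j\vv_j^\top$, which is $\tfrac12\|\vv_j\|^2\I_2$: the isotropic part survives and the $2\theta_j$ oscillation averages out. Hence
\[
\G_n\to\mathrm{Diag}\bigl(\tfrac12\|\vv_j\|^2\I_2\bigr)_j,\qquad \lambda_{\max}(\G_n)\to\tfrac12\max_j\|\vv_j\|^2 .
\]

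The bounds follow from the constraint $\sum_j\|\vv_j\|^2=1$ over $d/2$ blocks, which gives $\frac{2}{d}\le\max_j\|\vv_j\|^2\le 1$. The limiting ratio therefore lies in $\bigl[1/\sqrt{d},\,1/\sqrt{2}\bigr]$. It equals $1/\sqrt2$ when $\vv$ is concentrated in one rotation plane and $1/\sqrt d$ when $\vv$ is spread evenly across the blocks. This is the claimed shrinkage by at least $\sqrt2$ and at most $\sqrt d$. Eigenvalues depend continuously on the matrix, so convergence of $\G_n$ transfers directly to $\lambda_{\max}$.

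The main obstacle is the equidistribution step: stating a clean, checkable sufficient condition on $\uu$ under which all the weighted exponential sums vanish. I would first try assuming the $u_i^2$ are bounded above and below by positive constants. The weighted sums are then controlled by ordinary Weyl sums, which are $O(1/n)$ for each fixed $\omega\notin 2\pi\mathbb{Z}$, and Abel summation handles slowly varying weights. The statement also needs the frequencies $\theta_j$, $\theta_j\pm\theta_l$ to avoid $2\pi\mathbb{Z}$, which holds for the standard RoPE base.
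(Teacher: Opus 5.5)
Your proposal is correct and follows essentially the same route as the paper. Both arguments form the Gram matrix $\r(\X)^\top\r(\X)$ and split it into the isotropic block-diagonal part $\tfrac12\|\uu\|_2^2\,\mathrm{Diag}(\alpha_k^2\I_2)$ plus oscillatory terms. Both then show those terms are $o(\|\uu\|_2^2)$; the paper does this by Abel summation under $u_j=\Theta(1)$ and $\sum_j|u_{j+1}^2-u_j^2|=o(n)$, which is exactly the concrete instance of your Weyl-type condition that you sketch. Both finish by passing to $\lambda_{\max}$ via Weyl's inequality (equivalently, continuity of eigenvalues).

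Two small remarks on your hypotheses. Two-sided boundedness of $u_i^2$ alone does not make the weighted sums vanish: for example, $u_i^2=2+\cos(2i\theta_k)$ leaves a nonzero limit, so the bounded-variation (Abel summation) part of the condition is genuinely needed. Your assumption $\max_i w_i\to 0$ is never used.
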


Fig.~\ref{fig:singulars} (left) shows that key/query matrices are close to rank-1 in practice, which motivates our rank-1 assumption on 
$\X$ in Lemma~\ref{lem:spectral}. Under this assumption, the $j^{\mathrm{th}}$ key/query projection takes the form $u_j \vv \in \mathbb{R}^d$.
We expect a similar result to hold when this assumption is relaxed to the approximately rank-1 case, which we leave for future work. The exact reduction factor depends on the energy distribution across the channels of the underlying learned representation $\vv$, as detailed in \ref{app:theory}.

Fig.~\ref{fig:singulars} (middle) plots the ratio of FSV under RoPE in practice. In all cases, RoPE decreases the FSV as expected, but the decrease is limited within the training context, falling smoothly up to length 4k. The decrease is more aggressive for inputs above the 8k training length. This confirms that cluster means drift toward the origin as RoPE is applied over longer sequences. 

At the same time, RoPE preserves the sum of $\X$'s squared singular values, or equivalently, preserving the Frobenius norm, $\|\X\|_F$ \citep{golub2013matrix}:

\begin{lemma}\label{lem:fro}
Applying RoPE preserves the Frobenius norm of any $\X$, i.e., $\|\r(\X)\|_F = \|\X\|_F$.
\end{lemma}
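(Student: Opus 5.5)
The plan is to make the RoPE action explicit row by row and then use the fact that rotations are orthogonal. For $\X\in\R^{n\times d}$ with rows $\x_1^\top,\dots,\x_n^\top$, the map $\r$ sends the $j^{\mathrm{th}}$ row to $\x_j^\top \Rm^{j}_\Theta$, where $\Rm^{j}_\Theta$ is the block-diagonal matrix of $2\times 2$ rotations in \eqref{fn:rope-RMat}. The position index (e.g.\ $j$ or $j-1$) plays no role in what follows, so I will not fix a convention. The key observation is that $\r$ acts independently on each row by a position-dependent orthogonal matrix. It is \emph{not} a single right-multiplication of $\X$, which is why the singular values themselves can change (Lemma~\ref{lem:spectral}) while the Frobenius norm cannot.

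First, I will check that each $\Rm^{j}_\Theta$ is orthogonal. A block-diagonal matrix whose blocks are orthogonal is orthogonal. Each block $\begin{bmatrix}\cos j\theta_i & -\sin j\theta_i\\ \sin j\theta_i & \cos j\theta_i\end{bmatrix}$ satisfies $B^\top B=\I$ because $\cos^2+\sin^2=1$ and the off-diagonal terms cancel. Hence $\Rm^{j}_\Theta(\Rm^{j}_\Theta)^\top=\I$, and so $\|\x_j^\top\Rm^{j}_\Theta\|_2^2=\x_j^\top\Rm^{j}_\Theta(\Rm^{j}_\Theta)^\top\x_j=\|\x_j\|_2^2$.

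Second, I will decompose the Frobenius norm into squared row norms:
\[
\|\r(\X)\|_F^2=\sum_{j=1}^n\|\x_j^\top\Rm^{j}_\Theta\|_2^2=\sum_{j=1}^n\|\x_j\|_2^2=\|\X\|_F^2 .
\]
Taking square roots gives the claim. I will also note the equivalent statement mentioned before the lemma: since $\|\X\|_F^2=\sum_i\sigma_i(\X)^2$, RoPE preserves the sum of squared singular values.

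There is no real obstacle here. The only point needing care is being precise that $\r$ is a row-wise, position-dependent rotation rather than multiplication by one fixed matrix. The argument should therefore go through row norms, not unitary invariance of $\|\cdot\|_F$ under a global transform. Once that is stated, the rest is immediate.
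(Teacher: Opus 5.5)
Your proposal is correct and follows essentially the same route as the paper's proof: write $\|\X\|_F^2$ as a sum of squared row norms, then note that each row is multiplied by a block-diagonal matrix of $2\times 2$ rotations, which is orthogonal and hence norm-preserving. The only difference is cosmetic: you act on row vectors on the right ($\x_j^\top\Rm^j_\Theta$), whereas the appendix acts on column vectors on the left ($\Rm_j\x_j$), and that convention does not affect the argument.
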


Therefore, as the FSV falls due to RoPE, other singular values (representing other principal components) must grow to compensate. 
This suggests that RoPE induces an expansion and dispersion of clusters while pulling them toward the origin. We formalize this intuition and characterize cluster dispersion using the notion of \textbf{stable rank}~\citep{rudelson2007sampling,sanyal2019stable}, a continuous analogue of standard matrix rank. For a nonzero matrix $\X\in\R^{n\times d}$ with $n\geq d$, its stable rank is defined as 
$$\mathrm{srank}(\X) := \frac{\|\X\|_F^2}{\|\X\|_2^2} = \frac{\sum_{i=1}^d \sigma_i^2}{\sigma_1^2},$$
where $\sigma_1\geq \dots \geq \sigma_d$ are $\X$'s singular values, with the following properties: (i) $1\leq\mathrm{srank}(\X) \leq \mathrm{rank}(\X) \leq d$; (ii) $\mathrm{srank}(\X) = 1 \Leftrightarrow \mathrm{rank}(\X) = 1$; (iii) $\mathrm{srank}(\X) = d \Leftrightarrow \sigma_i$'s are all equal.

Theorem~\ref{thm:stable_rank}, which follows directly from Lemmas~\ref{lem:spectral} and~\ref{lem:fro}, shows that applying RoPE increases the stable rank when the input length $n$ becomes large, as demonstrated by Fig.~\ref{fig:singulars} (right). This increase in stable rank causes the point cloud to spread out around the origin, as illustrated in the last panel of Fig.~\ref{fig:pca}, where the corresponding stable rank values are reported.

\begin{theorem}\label{thm:stable_rank}
    Under the same assumptions as in Lemma \ref{lem:spectral},
applying RoPE to $\X$ increases its stable rank by at least a factor of 2, and by up to a factor of 
$d$, as the sequence length $n\to \infty$.
\end{theorem}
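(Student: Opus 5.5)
The proof will combine Lemma~\ref{lem:fro} with Lemma~\ref{lem:spectral}, both taken under the rank-1 hypothesis $\X=\uu\vv^\top$ with $\|\vv\|_2=1$. First record the stable rank before RoPE. Since $\X$ is rank one with $\X\neq 0$ (i.e.\ $\uu\neq 0$), property (ii) of stable rank gives $\mathrm{srank}(\X)=1$. Directly, $\|\X\|_F=\|\uu\|_2\|\vv\|_2=\|\X\|_2$, so the ratio is exactly one. Hence any lower or upper bound on $\mathrm{srank}(\r(\X))$ is automatically a bound on the multiplicative increase $\mathrm{srank}(\r(\X))/\mathrm{srank}(\X)$.

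Next, write the ratio of stable ranks explicitly:
\[
\frac{\mathrm{srank}(\r(\X))}{\mathrm{srank}(\X)}=\frac{\|\r(\X)\|_F^2}{\|\X\|_F^2}\cdot\frac{\|\X\|_2^2}{\|\r(\X)\|_2^2}.
\]
By Lemma~\ref{lem:fro}, the first factor equals $1$ for every $n$. Indeed, RoPE acts on each row $u_j\vv^\top$ by an orthogonal matrix $\mathbf{R}^j_\Theta$, so row norms, and hence the Frobenius norm, are unchanged.

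The ratio therefore reduces to $\bigl(\|\X\|_2/\|\r(\X)\|_2\bigr)^2$. Lemma~\ref{lem:spectral} says that, under its mild conditions on $\uu$ and as $n\to\infty$, this quantity lies asymptotically in $[\sqrt2,\sqrt d]$. Squaring, which is monotone on positive reals, gives an asymptotic factor in $[2,d]$. This is exactly the claim.

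There is one point to handle with care: the limit or liminf/limsup form in which Lemma~\ref{lem:spectral} is stated. If it states that $\|\r(\X)\|_2/\|\X\|_2\to c$ with $c\in[1/\sqrt d,1/\sqrt2]$, then continuity of $x\mapsto x^{-2}$ on $(0,\infty)$ gives $\mathrm{srank}(\r(\X))\to c^{-2}\in[2,d]$. If it instead states asymptotic bounds, the same inversion carries through inequality by inequality.

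As a sanity check on the upper end, the factor $d$ is also forced a priori, since $\mathrm{srank}\le\mathrm{rank}\le d$ by property (i). So that side is consistent with, and does not require more than, Lemma~\ref{lem:spectral}.

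I expect no real obstacle here, since the theorem is a direct corollary. The only place requiring attention is matching the precise asymptotic formulation of the rigorous version of Lemma~\ref{lem:spectral} in \ref{app:theory}, including its conditions on $\uu$ such as a nonvanishing norm growth and equidistribution of the RoPE phases. The identical hypotheses are simply inherited, as the statement says, "under the same assumptions".
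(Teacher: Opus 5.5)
Your proposal is correct and matches the paper's proof: the stable-rank ratio factors as the squared Frobenius ratio, which equals $1$ by Lemma~\ref{lem:fro}, times $(\|\X\|_2/\|\r(\X)\|_2)^2$, which by the formal Lemma~\ref{lem:spectral} tends to $2/\max_k\alpha_k^2\in[2,d]$ because $\sum_k\alpha_k^2=1$. One minor correction: the inherited hypotheses on $\uu$ are $u_j=\Theta(1)$ and $\sum_{j}|u_{j+1}^2-u_j^2|=o(n)$, not an equidistribution condition, but since you take them wholesale this does not affect the argument.
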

Similar to Lemma~\ref{lem:spectral}, the precise increase factor depends on the energy distribution of $\vv$ across the rotation planes: concentration on a single plane yields a factor of $2$, whereas uniform distribution across all planes yields a factor of $d$. 

We conclude that our prior observations accurately describe high-dimensional clustering behavior. 


  \subsection{Sink Tokens}
  \label{sec:sinks}
    Latent keys and queries cluster tightly into unaligned point clouds, and RoPE causes those clouds to disperse and overlap over time, particularly when input length exceeds training length. But how does this produce out-of-distribution behavior in the attention mechanism itself, and for a transformer model as a whole? We claim that the effect of clustering is mediated by the sink token. 
    
    Prior work establishes that transformers attend heavily to the first token, regardless of input~\citep{xiao2023efficient}. Prevailing wisdom is that this prevents over-mixing: information retrieval is not always useful, so attention heads must formulate a null operation~\citep{barbero2025llms}. Softmax normalizes attention scores to sum to 1, so heads cannot avoid attending. Instead, they ``sink'' attention into a placeholder key conveying no information -- in practice the first token, typically a beginning-of-sequence indicator. When an attention head does perform meaningful information retrieval, it borrows weight from the sink token and reallocates it to the chosen key, as shown in \ref{app:masks}.

    Under the intuitive overlapping-clouds model of latent keys/queries, sink token behavior is hard to reconcile. How can a single embedding align to \textit{all} directions by default? Why not default to the current token -- where keys and queries, projections of the same input, are easy to align -- as in Mamba and other linear attention layers?~\citep{gu2023mamba, katharopoulos2020transformers} Our observations help to explain this behavior. 
    Same-token key-query self-alignment is difficult to consistently impose when keys and queries are distant. It is easier to assign a sink, and place its key near the origin, granting it a near-zero dot product with all queries. When average key-query product is negative, the sink becomes the most-attended by default. This is borne out in practice: key vectors for the first input token have unusually small $\ell_2$ norm, as shown in Fig.~\ref{fig:keynorms} (left). Smaller norm produces larger dot products, as shown in Fig.~\ref{fig:keynorms} (right), which displays the average dot product of each key across subsequent queries, normalized by the largest product in the set. In this case, that's consistently the sink token, with recency bias responsible for the later upward trend. 
    
    \begin{figure}[h!]
    \centering
    \includegraphics[width=.9\textwidth]{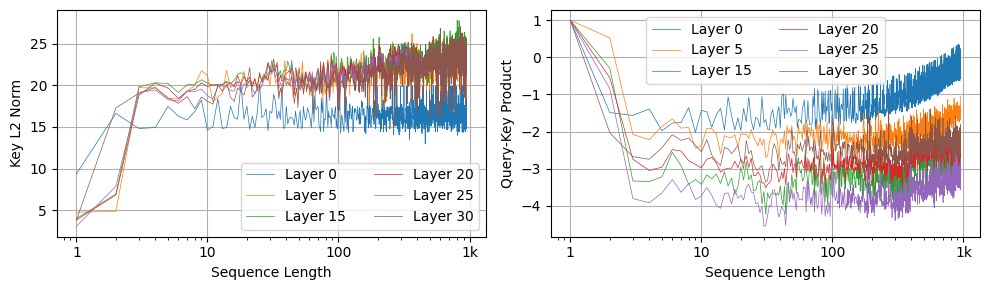}
    \vspace{-12pt}
    \caption{\textbf{Left:} Key $\ell_2$ norm as a function of position in sequence. Sink token is consistently small. \textbf{Right:} Keys have low dot product against subsequent queries in expectation, except for the first and most recent tokens. Scores are normalized by the highest value, in this case always the sink.
        }
    \label{fig:keynorms}
    \end{figure}

    \subsubsection{The Impact of RoPE on Sink Tokens}
    Within the training context length, key and query clusters are separated to the point that a sink token with small key norm can absorb the majority of attention weight. Beyond the training length, however, RoPE causes clusters to disperse and overlap. When this happens, key and query points begin to obtain positive dot products. This stops the sink token from functioning, and we claim this is the cause of out-of-distribution behavior when transformers are exposed to long inputs. 

    Fig.~\ref{fig:unsink} illustrates this behavior in Llama3. The left-hand plot captures the attention weight allocated to the sink token as a function of input length. The share of attention weight (with RoPE applied) varies widely but stably within the training length of 8k, but then falls sharply, decaying to zero over time as clusters progressively overlap and highly-aligned point pairs accumulate. Meanwhile, the activation of the sink token without RoPE stays roughly constant. We hypothesize that the activation starts off lower within the training length due to the observation in Fig.~\ref{fig:angles} (right): key/query clouds are more opposed \textit{after} applying RoPE, lowering the average dot product relative to the sink. In any case, it is clear that RoPE causes catastrophic collapse of sink token weight for long inputs. 
    
    Fig.~\ref{fig:unsink} (right) confirms that the decay in sink token attention weight is a function of key/query cluster overlap. As two point clouds approach the origin and disperse, the chance for high directional alignment between point pairs increases, and so the maximum dot pairwise product should increase as well. In practice, the maximum key/query dot product across all keys, per-query, does rise steadily over sequence length when RoPE is applied. Without RoPE, cluster behavior is stable over time, and so too, therefore, is the maximum degree of alignment between key and query points. 
    \subsubsection{A Unified Understanding of RoPE Attention}
    We now establish a unified geometric understanding of attention, RoPE, and sink tokens. The shift-invariance of softmax induces keys and queries to gather into opposing clouds across the origin. Sink tokens are implemented by positioning the first key near the origin, making that key's dot product with any query small. Because the clusters are opposed, average key/query dot product is negative, defaulting attention to the sink and mixing tokens only in the case of particularly aligned pairs. 

    RoPE complicates this arrangement by spinning points around and across the origin. Some channel pairs rotate much faster than others, but over time more and more channel pairs drift meaningfully from their original locations. 
    Eventually all channels shift into orbit, transforming previously well-separated key and query clusters into dispersed, overlapping balls. This produces positive dot products between keys and queries, overwhelming the small, but previously dominant sink token logit. Transformers with RoPE fail on long inputs because they effectively lose access to the sink token, broadcasting an excess of information from the wrong tokens forward through time. 

    \begin{figure}[h!]
      \centering
      \includegraphics[width=.9\textwidth]{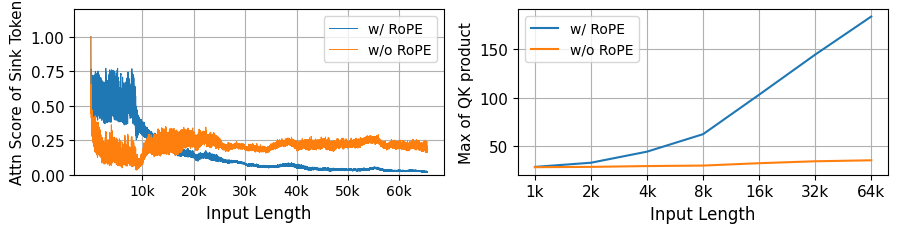}
      \vspace{-12pt}
      \caption{\textbf{Left:} Sink token attention  weight vs. input length. With RoPE, sink-token attention decays to zero beyond the training length; without RoPE it remains stable. \textbf{Right:} Maximum Query–Key dot product vs. input length. The max QK product increases with length only when RoPE is active.}
    \label{fig:unsink}
    \end{figure}
    
\section{Method}
\label{sec:method}
  Insight into RoPE's out-of-distribution behavior informs mitigation techniques. 
  Based on our analysis, two criteria must be met to preserve sink token functionality across longer sequences with RoPE. First, key and query clusters must maintain some degree of separation in the limit, in order to prevent spurious positive dot products between keys and queries that overwhelm the sink token. Second, this lower bound on cluster separation must be attained within the training length; otherwise, keys and queries will drift closer than seen during training, again opening the door to spurious positive dot products. 
  Many scaling techniques exist that (perhaps unintentionally) 
  fulfill these criteria.
  PI \citep{chen2023extendingcontextwindowlarge} and YaRN \citep{peng2023yarnefficientcontextwindow}, for example, both limit the degree of drift in low-frequency channels to that seen during training. 
  This maintains the separation of key and query clusters in those channels over extended contexts, while avoiding out-of-distribution drift.

  Training a new model from scratch offers greater design freedom. \cite{barbero2024round} suggest that information is better preserved when RoPE is limited to a fraction of channels, as positional and contextual information can be embedded in the RoPE subspace, while long-term semantic content can be allocated to the stable RoPE-free subspace. However, we still expect this approach to fail on extended contexts, as it only fulfills the first criterion (cluster stability) without the second (lower-bound attainment). Low-frequency channels still exist, reproducing the issues observed in \sref{sec:analysis}. Fig.~\ref{fig:synthetics} (left) illustrates this behavior by repeating the singular value analysis from Fig.~\ref{fig:singulars} on a synthetic point cloud. Standard RoPE (blue) fulfills neither criteria, decaying FSV all the way to zero, and not within the training length. RoPE applied to half the channels (green) successfully imposes a non-trivial lower bound on FSV (and thus cluster overlap), but does not attain it within 4k steps. Thus we can still expect out-of-distribution behavior when FSV falls below levels seen in training.

  \begin{figure}[h!]
    \centering
    \includegraphics[width=.9\textwidth]{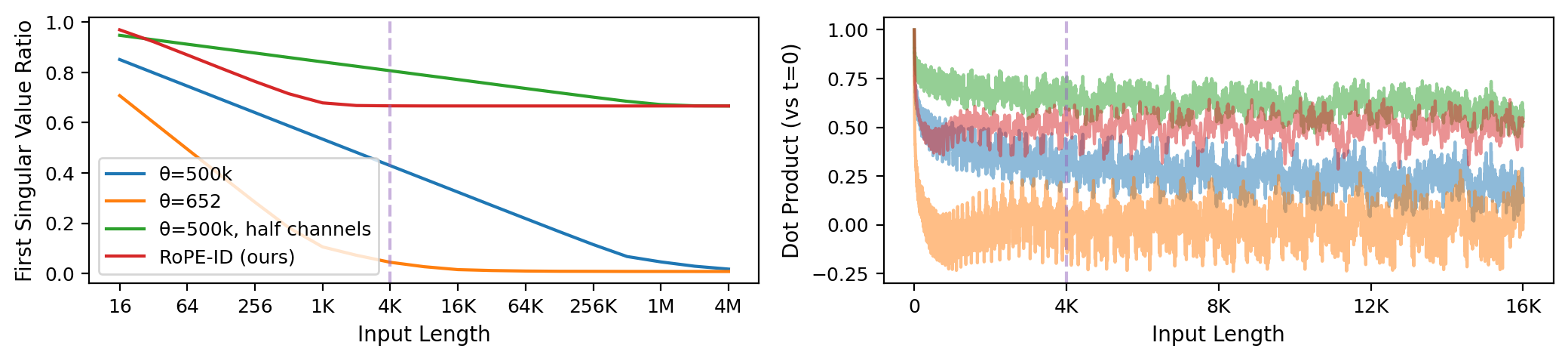}
    \vspace{-12pt}
    \caption{Expected RoPE behavior for our proposed method and three baselines. Dotted line indicates hypothetical training length of 4k and head dimension is 128. \textbf{Left:} Repeats the Fig.~\ref{fig:singulars} singular value ratio before/after RoPE, for a synthetic point cloud of ones vectors. \textbf{Right:} Long-term RoPE decay for the same techniques, showing similar behaviors. 
    }
    \label{fig:synthetics}
  \end{figure}
  
  An alternative solution is to eliminate the low frequencies of RoPE altogether: raise the lowest value enough to complete a full cycle within the training length. This ensures that clusters ``finish'' drifting to the origin, as they devolve into uncorrelated rotations -- overlapping shells around the origin -- that the model can expect to persist indefinitely. 
  Channel pairs with RoPE cannot rotate ``out of distribution'' if the entire rotation arc is covered. 
  However, this again only fulfills one of the necessary criteria for length generalization: as shown in Fig.~\ref{fig:synthetics}, high frequency RoPE (orange) with base frequency $\theta=652$ (the lowest frequency to complete a cycle in 4k steps) approaches its FSV lower bound in 4k steps -- but this lower bound is trivial. Key and query clusters are decaying entirely, and thus failing to preserve information across longer contexts.
  \cite{liu2024scaling} analyze this approach, but 
  limit their evaluation to perplexity. While perplexity is improved by the newfound stability over long contexts, it does not capture the loss of distant information through several cycles of uncorrelated rotation. 
  Long-context information retrieval is likely still difficult in this setting. 
  
  We hypothesize that \textit{combining} high RoPE frequency with partial application fulfills both criteria, yielding natural generalization to long contexts. Both changes are required for stable, discriminative behavior: key/query clusters fully merge in the RoPE subspace (satisfying criteria 2), but preserve sink token functionality via continued separation in RoPE-free subspace (criteria 1). 
  This approach, which we name \name (In Distribution),
  in Fig.~\ref{fig:synthetics} gives the best of both worlds: FSV decay (red) is non-trivially lower-bounded, and this bound is attained in 4k steps, maintaining cluster separation and sink token functionality by construction, while avoiding out-of-distribution behavior. 

  \subsection{Implementation}
  We apply RoPE to half the channels of each attention head, and adjust RoPE frequencies to attain desired behaviors. RoPE frequencies interpolate exponentially between 1 and $\frac{1}{\theta}$, where $\theta$ is the base frequency hyperparameter.
  We adjust the low end of this scale to
  two full rotations per training length, as one rotation may still preserve correlation between low frequency channels (i.e. some decay still occurs after the purple line in the orange high frequency curve of Fig.~\ref{fig:synthetics} (left)). 
  Maximum rotation speed is set to one cycle every 32 tokens, to better preserve information over short windows.
  Taking a softmax over many IID logits increases the denominator but not the numerator, resulting in the mixture distribution becoming artificially smooth over time.
  We address this via input length based temperature scaling, borrowing from \citep{peng2023yarnefficientcontextwindow}. 
  We evaluate both with and without temperature scale. 
  Ablations in \ref{app:ablations}, including comparisons to prior work of \citep{chen2024hopenovelpositionalencoding}, indicate that these simple heuristics represent safe middle-of-the-road choices, with model performance gradually improving as high frequencies are eliminated, until hitting a sharp falloff.
  Details, code and further discussion for \name can be found in \ref{app:pseudocode}.
  

  To evaluate \name, we pretrain example 1B- and 3B-parameter decoders. Models use the Llama3 tokenizer and Dolma v1.7 dataset~\citep{soldaini2024dolma}, reweighted per \citep{chu2024bamba}. Training proceeds over 21 billion tokens, with hyperparameter and architecture details provided in \ref{app:details}. 

\section{Results}
  We compare \name against four baselines. First is a vanilla decoder with RoPE base frequency $500k$, which we expect to fail beyond the training length. Second and third are approaches from \sref{sec:method}: increase RoPE frequency so that all channel pairs complete a rotation, or apply the original RoPE on half the channels per head. The former should yield stable, but poor, performance over context lengths, while the latter should mitigate but not prevent performance degradation. The strongest baseline is tuning-free extension of the vanilla model using YaRN~\citep{peng2023yarnefficientcontextwindow}, an inference-time NTK method~\citep{NTK-Aware}, with default hyperparameters. Note that YaRN is provided oracle access to the final sequence length, when that length exceeds the default 4k. Across several benchmarks, our method is comparable to or outperforms all baselines, while generalizing gracefully out of the box, requiring no model adjustment or oracle access to sequence length.
\begin{table}[th]
  \centering
  \caption{Average RULER benchmark accuracy scores by sequence length. Highest average score for each sequence length and model size is in \textbf{bold}; runner-up is \underline{underlined}.}
  \resizebox{0.6\textwidth}{!}{
  \begin{tabular}{l|ccc|ccc}
  \toprule
  & \multicolumn{3}{c|}{\textbf{Llama-1B}} & \multicolumn{3}{c}{\textbf{Llama-3B}} \\
  \cmidrule(lr){2-4}\cmidrule(lr){5-6}
  \textbf{Method}  & 4k & 8k & 16k & 4k & 8k & 16k  \\
  \midrule
   RoPE &  39.72& 0.01& 0.03  & \underline{46.19} &  0.14  &  0.01 \\
   High frequency &  16.04& 7.60& 2.37  & 28.02 &  14.31 &  5.14\\
   HalfRoPE & \textbf{43.07}& 0.14& 0 & \bf 51.28 & 0.4 & 0.03 \\
   YaRN & \underline{40.24}& \underline{35.55}& \underline{30.25} & 43.90 & \textbf{45.09} & \underline{40.14} \\
   \midrule
   \name&{39.15}&29.71 & 14.29 &44.86&37.51&21.95\\
   \name(scaling)&  {39.15}& \textbf{35.64}& \textbf{30.83}  & 44.86 &  \underline{43.39}  &  \bf 42.0\\
  \bottomrule
  \end{tabular}}
    \label{tab:ruler_llama}
\end{table}

  
  \textbf{RULER}~\citep{hsieh2024rulerwhatsrealcontext} measures long context performance on synthetic tasks, such as needle-in-a-haystack retrieval and word counting. Table~\ref{tab:ruler_llama} shows that baselines perform as expected: RoPE and HalfRoPE perform well at 4k training length
    (with HalfRoPE even delivering a boost from its improved semantic encoding), but immediately fall to near-zero performance beyond that. High frequency RoPE degrades less in comparison, but also starts from a much lower score at 4k, as RoPE is scrambling latent information aggressively. YaRN delivers robust extrapolation to lengths 8k and 16k, without major penalty to baseline performance at 4k. Meanwhile, \name with temperature scaling is comparable to YaRN with slight gains for longer sequences.
    It is marginally the strongest evaluated approach overall, but clearly surpasses \name without temperature scaling, so we omit the non-scaled version from further analysis.
    A full breakdown of scores by task is provided in \ref{app:ruler_complete}.


  

\textbf{LongBench}~\citep{bai2024longbenchbilingualmultitaskbenchmark} corroborates our RULER results, with reported averages over five task categories: 
  single document question-answering, multi-document question-answering, few-shot learning, code completion and summarization. 
  We exclude non-English tasks as our models are trained on English data. A full breakdown of scores is provided in \ref{app:longbench_complete}.
  Results in Table~\ref{tab:longbench_llama} mirror Table~\ref{tab:ruler_llama}: RoPE and HalfRoPE drop immediately after 4k, with HalfRoPE delivering a small boost within 4k. High frequency performs stably but poorly, while YaRN is able to bring up performance for long inputs. Our method trails YaRN at 3B scale, but is superior at 1B. We conclude that \name successfully generalizes to longer inputs out of the box. 

    \begin{table}[th]
  \centering
  \caption{LongBench accuracy scores, averaged over 14 English tasks, by sequence length. Highest average score for each sequence length and model size is in \textbf{bold}; runner-up is \underline{underlined}.}
  \resizebox{0.6\textwidth}{!}{
  \begin{tabular}{l|ccc|ccc}
  \toprule
  & \multicolumn{3}{c|}{\textbf{Llama-1B}} & \multicolumn{3}{c}{\textbf{Llama-3B}} \\
  \cmidrule(lr){2-4}\cmidrule(lr){5-7}
  \textbf{Method}  & 4k & 8k & 16k  & 4k & 8k & 16k  \\
  \midrule
   RoPE &  14.61 & 8.23 & 8.73  & \underline{18.62} &  11.36  &  10.42 \\
   High frequency & 11.8 & 11.44 & 11.04 & 14.19 & 13.82 & 13.78 \\
   HalfRoPE & \underline{15.38} & 8.73 & 8.86 & \bf 19.42 & 10.7 & 10.62 \\
   YaRN & 14.84& \underline{14.54}& \underline{14.09}  & 15.87 & \textbf{19.29} & \textbf{19.63}  \\
   \name (scaling) &  \textbf{15.83}& \textbf{15.83} & \textbf{15.80}  & 15.92 &  \underline{17.13}  &  \underline{17.94}\\
  \bottomrule
  \end{tabular}}
    \label{tab:longbench_llama}
\end{table}


\textbf{Commonsense Reasoning} tasks act as a sanity check in 
    Table~\ref{tab:commonsense_reasoning}. 
    Model scores are all similar: RoPE frequency and number of channels have little impact on expressivity within the training length. To the degree that scores differ, \name is in the top-3 for all tasks and settings.
\begin{table}[th]
  \centering
  \caption{Standard evaluation (accuracy) on common sense reasoning tasks}
  \resizebox{0.8\textwidth}{!}{
  \begin{tabular}{l|ccc|c|ccc|c}
  \toprule
  & \multicolumn{4}{c|}{\textbf{Llama-1B}} & \multicolumn{4}{c}{\textbf{Llama-3B}} \\
  \cmidrule(lr){2-5}\cmidrule(lr){6-9}
  \textbf{Method}  & ARC-C & HellaSwag & PIQA & \textbf{Avg.} & ARC-C & HellaSwag & PIQA & \textbf{Avg.} \\
  \midrule
   RoPE &  25.77 &  \bf 44.00 &  \bf 69.26 & 46.34 & 29.18 &  \bf 53.95  &  \bf 72.74 & 51.96 \\
   High frequency &  25.17 &  42.99 &  \bf 69.26  & 45.81 & 29.61 &  53.32 &  71.87 & 51.60 \\
   HalfRoPE & \bf 26.45  & \bf 44.00  & 68.77 & \bf 46.41 & \bf 32.17 & 53.87 & 72.03 & \bf 52.69 \\
   YaRN & 25.60 & 41.89 & 68.61 & 45.37 & 29.10 & 52.46 & 72.20 & 51.25 \\
   \name&  25.60 &  43.58 &  68.88 & 46.02 & 30.03 &  \bf 53.95  &  72.25 & 52.08 \\
  \bottomrule
  \end{tabular}}
  \label{tab:commonsense_reasoning}
\end{table}

\textbf{Extended Context Length} in above evaluations is limited to four times the training length, as tuning-free extension techniques generally cannot avoid performance degradation beyond this point. To evaluate RoPE-ID at greater context lengths, we fine-tune our 1b checkpoints to 128k sequence length, applying YaRN scaling to the vanilla 1b model. Fig.~\ref{fig:finetuning} shows that RoPE-ID and YaRN trade off RULER performance over different context lengths, with RoPE-ID offering improvement on longer inputs. Surprisingly, applying YaRN-style scaling to the RoPE-ID checkpoint results in a tuned model outperforming both techniques at all input lengths. Further discussion and implementation details are in \ref{app:finetuning}, and we leave exploration of possible synergies to future work.
\begin{figure}[h!]
    \centering
    \includegraphics[width=.5\textwidth]{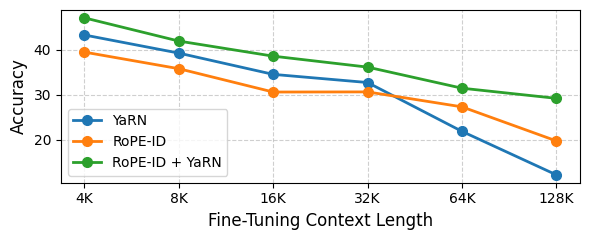}
    \vspace{-12pt}
    \caption{
    Average RULER accuracy scores for fine-tuned 1b models. RoPE-ID and YaRN trade off performance over length, but demonstrate surprising synergy when applied jointly.
    }
    \label{fig:finetuning}
\end{figure}

\textbf{Reapplying Analysis}
    from \sref{sec:clusters} confirms that our trained models obey our geometric framework. We recreate Fig.~\ref{fig:angles} and \ref{fig:pca} for baseline and \name models in \ref{app:analysis}. 
    Our 1B example decoder mirrors the behavior of established LLMs, while \name behaves as expected, delivering stable, and stably separated, key and query clusters over time.

\section{Discussion}
    From empirical analysis we develop a unified understanding of attention geometry and long-context failure modes. Keys and queries form tight clusters in opposing directions, allowing sink tokens to absorb attention weight by default via small $\ell_2$ norm. RoPE inhibits this behavior by merging and dispersing the point clouds, particularly beyond the training length. Overlapped point clouds inflate key/query alignment, preventing the sink token from functioning.
    From this understanding, we derive two natural criteria for length generalization with RoPE: lower-bounded cluster separation in the limit, and attainment of that limit during training.
    We fulfill these criteria and produce stable model behavior by applying RoPE with high frequency to a fraction of channels, and demonstrate strong long-context performance on downstream tasks out of the box. 

    Beyond \name, other approaches based on this analysis are possible (e.g., applying high frequency RoPE to a fraction of \textit{heads}, or manually injected sink tokens as in Hymba~\citep{dong2025hymba}). 
    Additionally, our proposed criteria are necessary for generalization, but not sufficient. \citep{wang2024resonance}, for example, demonstrate that high frequencies can still drift slowly out of distribution radially, a failure mode not covered by \name. Combining such techniques, as in \name itself, could produce even more robust models.
    It is also possible to combine \name with inference-time model adjustments. We leave this, as well as further exploration of long-context fine-tuning of \name models, to future work.

    \textbf{Reproducibility:} Code for \name is provided in \ref{app:pseudocode} with training details in \ref{app:details}. Empirical analysis techniques are straightforward, and evaluation uses standard benchmarks.


\section*{Acknowledgments}
We are grateful to Selcuk Gurses (UAlbany) for valuable discussions on the theoretical analysis. This work was in part supported by NSF grant DMS-2208126, the SUNY-IBM AI Research Alliance grant, the CEAIS grant, and the IBM PhD Fellowship Award. 

\bibliography{iclr2026_conference}
\bibliographystyle{iclr2026_conference}

\newpage
\appendix
\section{Appendix}
\subsection{Theoretical Analysis of Cluster Dispersion under RoPE}\label{app:theory}

Let us first fix the notations: $\r(\X)$ denotes the application of RoPE to a key/query matrix $\X = [\x_1, \dots, \x_n]^\top \in\R^{n\times d}$. Specifically, $\r(\X) := [\Rm_1 \x_1, \dots, \Rm_n \x_n]^\top\in\R^{n\times d}$, where $\Rm_j = \mathrm{Diag}(\Rm_{j,\theta_1},\dots, \Rm_{j,\theta_{d/2}})\in\R^{d\times d}$ with
$\Rm_{j,\theta_k} = \begin{bmatrix}
        \cos(j\theta_k) & -\sin(j\theta_k)\\
        \sin(j\theta_k) & \cos(j\theta_k)
    \end{bmatrix}\in\R^{2\times 2}$ and $\theta_k = \theta^{-2(k-1)/d}$ for all $1\leq k\leq d/2$. 
    For any $\x\in\R^d$, $\x^{(k)}:= [x_{2k-1}, x_{2k}]^\top\in\R^2$ denotes the subvector in the $k^{\mathrm{th}}$ rotation plane/channel.
    $\|\cdot\|_2$ denotes the Euclidean or $\ell_2$ norm of a vector or denotes the spectral norm of a matrix, and $\|\cdot\|_F$ denotes the Frobenius matrix norm. The stable rank of a matrix $\X$ is defined as $\mathrm{srank}(\X) := \frac{\|\X\|_F^2}{\|\X\|_2^2}$. 

\bigskip

\begin{customlemma}{1}[formal]
Suppose $\X = \uu\vv^\top \in\R^{n\times d}$, where $\uu\in\R^n$, $\vv\in\R^d$ and $\|\vv\|_2 = 1$. If $\forall j$, $u_j = \Theta(1)$, and the sequence $\{u_j^2\}_{j=1}^n$ has sublinear growth of total variation in the sequence length $n$, i.e., $\sum_{j=1}^{n-1}|u_{j+1}^2-u_{j}^2| = o(n)$, then as $n$ increases, we have
$$
\frac{\|\r(\X)\|_2}{\|\X\|_2} = \frac{1}{\sqrt{2}}\max_{1\leq k\leq d/2} \alpha_k  + o(1)
$$
where $\alpha_k:=\|\vk\|_2 = \sqrt{v_{2k-1}^2+v_{2k}^2}$ satisfying $\max_{k} \alpha_k\in[\sqrt{2/d},1]$.
\end{customlemma}

\begin{remark}
    The assumption that $\sum_{j=1}^{n-1}|u_{j+1}^2-u_{j}^2| = o(n)$ imposes that the sequence $\{u_j^2\}$ must exhibit a certain degree of monotonicity. Indeed, if $\{u_j^2\}$ is strictly monotonic, then $\sum_{j=1}^{n-1}|u_{j+1}^2-u_{j}^2| = \Theta(1)$. In contrast, if $\{u_j^2\}$ is highly oscillatory, then $\sum_{j=1}^{n-1}|u_{j+1}^2-u_{j}^2| = \Theta(n)$, which violates the assumption.
\end{remark}

\begin{proof}[Proof of Lemma \ref{lem:spectral}]
    The pre-RoPE spectral norm
    $$
    \|\X\|_2 = \sqrt{\lambda_{\max}(\X^\top \X)} = \|\uu\|_2\sqrt{\lambda_{\max}(\vv\vv^\top)} = \|\uu\|_2\|\vv\|_2 = \|\uu\|_2 = \Theta(\sqrt{n}),
    $$
    where $\lambda_{\max}(\cdot)$ denotes the largest eigenvalue. 

    In what follows, we estimate the growth of post-RoPE spectral norm, i.e., $\sqrt{\lambda_{\max}(\r(\X)^\top \r(\X))}$. 
    Since $\|\vk\|_2 = \alpha_k$, due to rational invariance, without loss of generality, we assume $\vk = [\alpha_k, 0]^\top$ for simplicity. 
    
    Consider the Gram matrix $\G := \r(\X)^\top \r(\X)\in\R^{d\times d}$, since $\r(\X) = [u_1  \Rm_1 \vv, \cdots, u_n \Rm_n \vv]^\top\in\R^{n \times d}$, we have
\begin{align*}
    \G & = \r(\X)^\top \r(\X) = \begin{bmatrix}
    u_1 \Rm_1\vv, &
    \cdots, &
    u_n \Rm_n\vv
\end{bmatrix}
\begin{bmatrix}
    u_1  (\Rm_1 \vv)^\top \\
    \vdots \\
    u_n (\Rm_n \vv)^\top
\end{bmatrix} 
= \sum_{j=1}^n u_j^2 (\Rm_j \vv)(\Rm_j \vv)^\top \\
& = \sum_{j=1}^n u_j^2 
\begin{bmatrix}
    \Rm_{j,\theta_1} \vv^{(1)} \\
    \vdots \\
    \Rm_{j,\theta_{d/2}} \vv^{(d/2)}
\end{bmatrix}
\begin{bmatrix}
    (\Rm_{j,\theta_{1}} \vv^{(1)})^\top, &
    \cdots, &
    (\Rm_{j,\theta_{d/2}} \vv^{(d/2)})^\top
\end{bmatrix},
\end{align*}

    {\bf Diagonal Blocks of $\G$}. For $1\leq k\leq d/2$, the $k^{\mathrm{th}}$ diagonal block of $\G$ is given by
    $$
    \G_{k,k} = \sum_{j=1}^n u_j^2(\Rm_{j,\theta_k} \vk)(\Rm_{j,\theta_k} \vk)^\top \in\R^{2\times 2}.
    $$
    Recall that
    $$
    \Rm_{j,\theta_k} = \begin{bmatrix}
        \cos(j\theta_k) & -\sin(j\theta_k)\\
        \sin(j\theta_k) & \cos(j\theta_k)
    \end{bmatrix},
    $$
    we have $\Rm_{j,\theta_k} \vk = \alpha_k[\cos(j\theta_k), \sin(j\theta_k)]^\top$, and thus
    \begin{align*}
         \G_{k,k} & = 
    \alpha_k^2 
    \sum_{j=1}^n u_j^2 \begin{bmatrix}
         \cos^2(j\theta_k) &  \cos(j\theta_k)\sin(j\theta_k) \\
         \cos(j\theta_k)\sin(j\theta_k) &  \sin^2(j\theta_k)
    \end{bmatrix} \\
    & =
    \frac{\alpha_k^2}{2} \sum_{j=1}^n u_j^2
    \begin{bmatrix}
        1 + \cos(2j\theta_k) &  \sin(2j\theta_k) \\
         \sin(2j\theta_k) &  1 -  \cos(2j\theta_k)
    \end{bmatrix} 
    \\
    & = \frac{\alpha_k^2}{2} \|\uu\|_2^2 \, \I_2 + \E_{k,k},
    \end{align*}
    where 
    $$
    \E_{k,k} := \frac{\alpha_k^2}{2} \sum_{j=1}^n u_j^2 \begin{bmatrix}
          \cos(2j\theta_k) & \sin(2j\theta_k) \\
          \sin(2j\theta_k) &  -  \cos(2j\theta_k)
    \end{bmatrix}. 
    $$
  {\bf Off-Diagonal Blocks of $\G$}. Similarly, for $k\neq l$, the $(k,l)^{\mathrm{th}}$ block is
  \begin{align*}
     \E_{k,l} := & \, \G_{k,l}  = \sum_{j=1}^n u_j^2(\Rm_{j,\theta_k} \vk)(\Rm_{j,\theta_l} \vv^{(l)})^\top\\
     = & \,
    \alpha_k^2 \sum_{j=1}^n u_j^2 \begin{bmatrix}
         \cos(j\theta_k)\cos(j\theta_l), &  \cos(j\theta_k)\sin(j\theta_l) \\
         \sin(j\theta_k)\cos(j\theta_l), &  \sin(j\theta_k)\sin(j\theta_l)
    \end{bmatrix} \\
    = & \,
    \frac{\alpha_k^2}{2} \sum_{j=1}^n u_j^2 \begin{bmatrix}
         \cos(j(\theta_{k}+\theta_{l}))+\cos(j(\theta_{k}-\theta_{l})), &  \sin(j(\theta_{k}+\theta_{l}))-\sin(j(\theta_{k}-\theta_{l})) \\
         \sin(j(\theta_{k}+\theta_{l}))+\sin(j(\theta_{k}-\theta_{l})), &  -\cos(j(\theta_{k}+\theta_{l}))+\cos(j(\theta_{k}-\theta_{l}))
    \end{bmatrix}
  \end{align*}

   Therefore, 
   $$\G = \frac{\|\uu\|_2^2}{2}\mathrm{Diag}\left(\alpha_1^2\I_2, \cdots,\alpha_{d/2}^2\I_{2}\right)+ \E.
   $$
   
  {\bf Bounding $\lambda_{\max}(\G)$}. We want to show that $\E$ is a subleading term and is entry-wise $o(n)$. Note that $\alpha_k\leq 1$, for any entry of $\E_{k,k}$, $\forall k$, its magnitude is upper bounded by
  $$
  \sqrt{\left(\sum_{j=1}^n u_j^2 \cos(2j\theta_k)\right)^2+\left(\sum_{j=1}^n u_j^2 \sin(2j\theta_k)\right)^2}= \left|\sum_{j=1}^n u_j^2 e^{2j\theta_k i} \right|.
  $$
   Denoting $S_j := \sum_{t=1}^j e^{2t\theta_k i}$ and using Abel's summation formula, we have
\begin{align*}
    \left|\sum_{j=1}^n u_j^2 e^{2j\theta_k i}\right| & = \left|u_n^2 S_n - \sum_{j=1}^{n-1} (u_{j+1}^2 - u_j^2)S_j\right| \\
    & \leq u_n^2 \left|S_n\right| + \max_{1\leq j\leq n-1} \left|S_j\right| \sum_{j=1}^{n-1} |u_{j+1}^2 - u_j^2|,
\end{align*}
where $|S_j| = |e^{2\theta_k i}|  \left|\frac{1 -  e^{2j \theta_k i}}{1-  e^{2\theta_k i}}\right| \leq \frac{2}{|1-  e^{2\theta_k i}|} = \frac{1}{|\sin(\theta_k)|}= O(1)$ for all $j$, since $\theta_k = \theta^{-2(k-1)/d}\in[\frac{1}{\theta},1]$ is an irrational multiple of $\pi$. Moreover, we have assumed $\sum_{j=1}^{n-1} |u_{j+1}^2 - u_j^2|=o(n)$, so
$\left|\sum_{j=1}^n u_j^2 e^{2j\theta_k i}\right| =  o(n)$.

Similarly, for any entry of the off-diagonal block $\E_{k,l}$, we can also show that its magnitude is upper bounded by
$$
\left|\sum_{j=1}^n u_j^2 e^{j(\theta_k +\theta_l)i} \right| + \left|\sum_{j=1}^n u_j^2 e^{j(\theta_k -\theta_l)i} \right| = o(n),
$$
 since $(\theta_k \pm\theta_l)/2$ are still irrational multiples of $\pi$. 
 
By Weyl's inequality, we have
 $$
\frac{\|\uu\|_2^2}{2}\max_{k}\alpha_k^2 - \|\E\|_2 \leq \lambda_{\max}(\G) \leq \frac{\|\uu\|_2^2}{2}\max_{k}\alpha_k^2 + \|\E\|_2.
 $$
Note that since $d$ is fixed, $\|\E\|_2\leq\|\E\|_F = o(d n) = o(n)$, and thus
$$
\lambda_{\max}(\G) = \frac{\|\uu\|_2^2}{2}\max_{k}\alpha_k^2 + o(n).
$$
Finally, since $\|\uu\|_2 = \Theta(\sqrt{n})$, we have
$$
\frac{\|\r(\X)\|_2}{\|\X\|_2} = \frac{\sqrt{\lambda_{\max}(\G)}}{\|\uu\|_2} = \frac{1}{\sqrt{2}}\max_{1\leq k\leq d/2} \alpha_k + o(1).
$$
\end{proof}
    
\bigskip

\begin{customlemma}{2}
Applying RoPE preserves the Frobenius norm of any $\X$, i.e., $\|\r(\X)\|_F = \|\X\|_F$.
\end{customlemma}

\begin{proof}
[Proof of Lemma \ref{lem:fro}]
Since
$\|\X\|_F^2 = \sum_{j=1}^n \|\x_j\|_2^2$ and 
$\|\r(\X)\|_F^2 = \sum_{j=1}^n \|\Rm_j\x_j\|_2^2$, it suffices to show that for all $j$,
$\|\x_j\|_2^2 = \|\Rm_j\x_j\|_2^2$. Since every diagonal block of $\Rm_j$ is a $2\times 2$ rotation matrix, $\Rm_j$ is also a rotation matrix and thus norm preserving, which completes the proof.
\end{proof}

\bigskip

\begin{customthm}{1}[formal]
        Suppose $\X = \uu\vv^\top \in\R^{n\times d}$, where $\uu\in\R^n$, $\vv\in\R^d$ and $\|\vv\|_2 = 1$. Under the same assumptions on $\uu$ as in Lemma \ref{lem:spectral}, we have
    $$
    \lim_{n\to\infty}\frac{\mathrm{srank}(\r(\X))}{\mathrm{srank}(\X)} = \frac{2}{\max_{1\leq k\leq d/2} \alpha_k^2} \in[2,d],
    $$
    where $\alpha_k:=\|\vk\|_2$, following the definition in Lemma~\ref{lem:spectral}.
\end{customthm}

\begin{proof}[Proof of Theorem \ref{thm:stable_rank}]
Using Lemmas \ref{lem:spectral} and \ref{lem:fro}, we have
$$
\frac{\mathrm{srank}(\r(\X))}{\mathrm{srank}(\X)} = \left(\frac{\|\r(\X)\|_F}{\|\X\|_F}\right)^2 \left(\frac{\|\X\|_2}{\|\r(\X)\|_2}\right)^2 = \frac{2}{\max_{1\leq k\leq d/2} \alpha_k^2  + o(1)}.
$$ 
Since $\sum_{k=1}^{d/2} \alpha_k^2= \|\vv\|_2^2 = 1$, we have $\max_k \alpha_k^2\in[2/d, 1]$. Taking $n\to\infty$ completes the proof.
\end{proof}

\subsection{Attention Map and Sink Token Visualization}
\label{app:masks}
    \begin{figure}[h!]
    \centering
    \includegraphics[width=.8\textwidth]{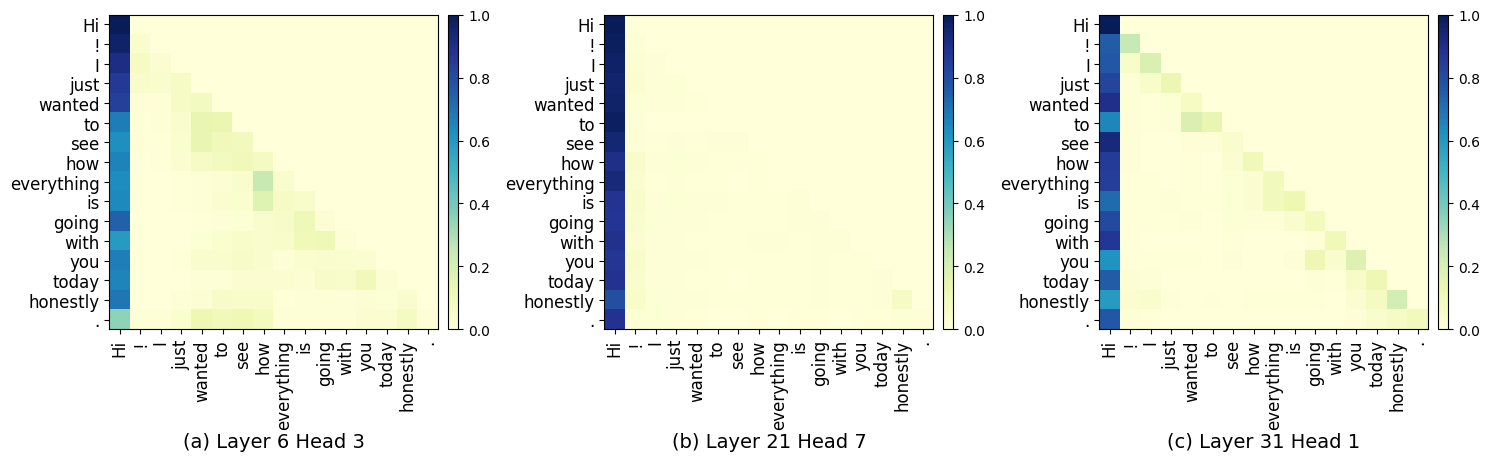}
    \vspace{-10pt}
    \caption{Attention patterns of three heads in LLaMA3-8B. Sink token behavior is clearly observed, even when performing non-trivial token mixing.}
    \label{fig:masks}
    \end{figure}

\subsection{Additional Clustering Analysis}
\label{app:clusteringmetrics}

Here we repeat the analysis performed in Fig.~\ref{fig:angles} for other distance and clustering metrics. Fig.~\ref{fig:dotprods} shows inter- and intra-cluster alignment as measured by dot product rather than cosine distance, better reflecting the actual attention logits. This introduces noise, as embedding norms can shift over time without affecting clustering behavior, and cosine distance is norm-invariant whereas dot products are not. This also introduces large differences between different models. Nevertheless, overall trends are roughly the same. 

\begin{figure}[h!]
    \centering
    \includegraphics[width=.9\textwidth]{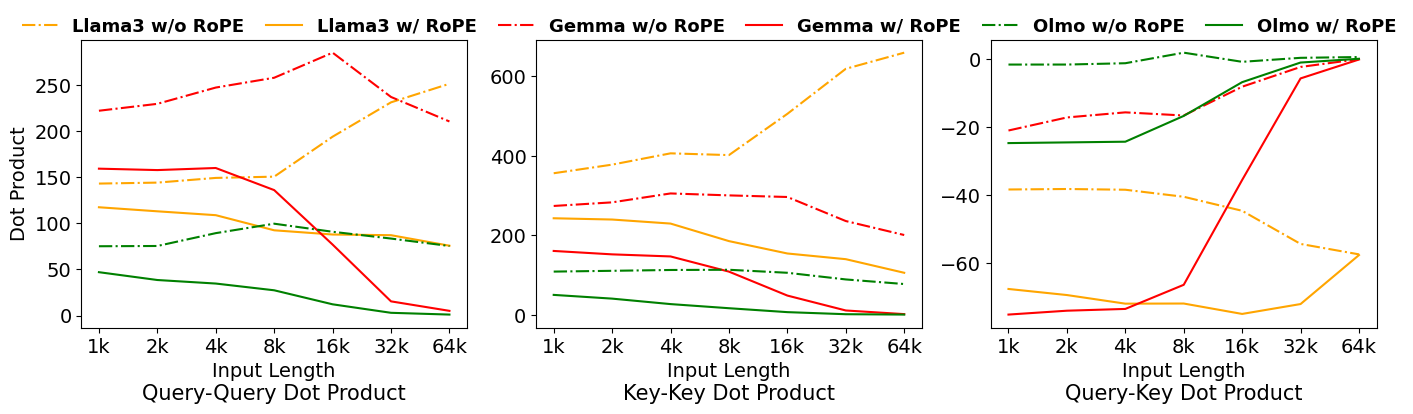}
    \vspace{-10pt}
    \caption{Mean of the pairwise dot product of query and key values across all heads for Llama3-8B, Gemma-7B and OLMo-7B which showcases the effect of RoPE across various context lengths showing similar trend as Fig. \ref{fig:angles}
    }
    \label{fig:dotprods}
\end{figure}

\begin{figure}[h!]
    \centering
    \includegraphics[width=.8\textwidth]{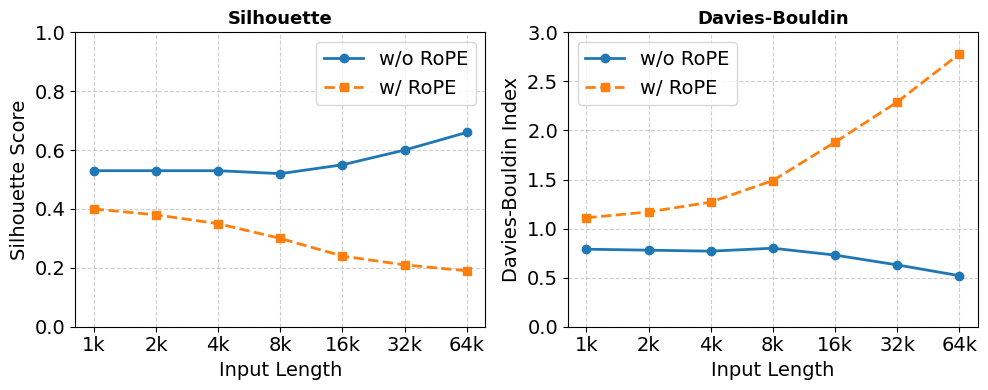}
    \vspace{-10pt}
    \caption{Silhouette Score (left) and Davies-Bouldin Index (right) showcasing the effect of RoPE on Internal Representation Clustering for Llama3-8B-Instruct. Lower Silhouette Score and higher Davies-Bouldin Index represents more overlap.
    }
    \label{fig:sil_dou_metric}
\end{figure}

Fig.~\ref{fig:sil_dou_metric} directly quantifies the degree of clustering, using Silhouette Score (left) and Davies-Bouldin Index (right) in Llama3-8B. Results again mirror Fig.~\ref{fig:angles}: clustering is consistent across sequence lengths prior to RoPE (and even increases beyond the training length), but falls over time once RoPE is applied (Silhouette Score decreases, while Davies-Bouldin Index increases as sequences become long). We conclude that clusters are behaving as described in the main paper.

\subsection{Implementation Details}
\label{app:pseudocode}

  In our approach we apply RoPE to half the channels of each attention head, and adjust the RoPE frequencies to attain desired behaviors. Standard RoPE frequencies interpolate exponentially between 1 and $\frac{1}{\theta}$, where $\theta$ is the base frequency hyperparameter, typically $10k$ or $500k$. We adjust both endpoints of this interpolated scale. First, we must ensure that all frequencies are high enough to complete at least one rotation within our training length of 4k tokens. We set two full rotations as the minimum, as one rotation may not be sufficient to fully eliminate correlation between low frequency channels (i.e. some decay still occurs in the high frequency curve of Fig.~\ref{fig:synthetics} (Left), even after the slowest channel pair finishes a full rotation at the purple line). We therefore update the minimum frequency scale value from $\frac{1}{\theta}$ to $\frac{4\pi}{4096}$. Second, we also pull the maximum frequency scale of 1 toward a more conservative value. Since we apply RoPE to only a fraction of available channels, it is important that the channels be discriminative. RoPE's fastest channel pair completes a cycle in $2\pi \approx 6$ tokens, after which information is effectively lost, as it becomes impossible to disentangle relative position modulo $2\pi$ from content. An effective 6-token window is very aggressive, so we pull back the max frequency to $\frac{2\pi}{32}$, completing a cycle in 32 tokens. 

  While applying high-frequency RoPE to a fraction of channels ensures stable clustering and sink token behavior for long inputs, out-of-distribution behavior can still occur via the softmax activation. Key/query dot products are stable over time by construction, so taking a softmax over an increasing number of IID key/query pairs will increase the softmax denominator, without a corresponding increase to the numerator. The result is the mixture distribution becoming smoother than expected over time. We account for this by introducing temperature scaling based on input length, borrowing from~\citep{peng2023yarnefficientcontextwindow}. The adjustment is $(1+0.1*\text{ln}(\text{min}(4096,n)))^2$, where 4096 is the training length and $n$ is the given input length. 

  Example code for this approach in HuggingFace Transformers is provided below.

\begin{lstlisting}[
    language=Python,
    basicstyle=\ttfamily\small,
    numbers=left,
    caption=Modification of scaling factor within the attention\_interface,
    numberstyle=\tiny\color{gray},
    frame=single,
    showstringspaces=false
]
# src/transformers/models/llama/modeling_llama.py

class LlamaAttention(nn.Module):
    ...
    def forward(...):
        ...
        attn_output, attn_weights = attention_interface(
            self,
            query_states,
            key_states,
            value_states,
            attention_mask,
            dropout=0.0 if not self.training else self.attention_dropout,
            scaling=self.scaling \
            * (0.1 * math.log(max(current_position, 4096) / 4096) + 1)**2,
            **kwargs,
        )
\end{lstlisting}

\begin{lstlisting}[
    language=Python,
    basicstyle=\ttfamily\small,
    numbers=left,
    caption=Modification of modeling\_rope\_utils with our method,
    numberstyle=\tiny\color{gray},
    frame=single,
    showstringspaces=false
]
# src/transformers/modeling_rope_utils.py

ROPE_INIT_FUNCTIONS = {
    ...
    "ourmethod": _compute_our_method_parameters,
}

def _compute_our_method_parameters(
    config, device, seq_len = None, **rope_kwargs
):
    if config is not None and len(rope_kwargs) > 0:
        raise ValueError(...)
    if len(rope_kwargs) > 0:
        base = rope_kwargs["base"]
        dim = rope_kwargs["dim"]
    elif config is not None:
        base = config.rope_theta
        % partial_rotary_factor = config.partial_rotary_factor \
            if hasattr(config, "partial_rotary_factor") else 1.0
        head_dim = getattr(config, "head_dim", None) \
            or config.hidden_size // config.num_attention_heads
        dim = int(head_dim * partial_rotary_factor)

    attention_factor = 1.0  # Unused in this type of RoPE

    logstart = math.log(2 * math.pi / base)  # 1 cycle in ratio steps
    logend = math.log(4 * math.pi / 4096)  # 2 cycles in 4k steps
    pos = torch.arange(0, dim // 2, device=device) / (dim // 2 - 1)
    logfreq = pos * (logend - logstart) + logstart
    inv_freq = logfreq.exp()
    return inv_freq, attention_factor
\end{lstlisting}

\begin{lstlisting}[
    language=Python,
    basicstyle=\ttfamily\small,
    numbers=left,
    caption={Applying RoPE to only half the channels. Due to how HuggingFace implements complex-valued RoPE, the first and third quarter of real-valued channels correspond to the first half of complex-valued channels.},
    numberstyle=\tiny\color{gray},
    frame=single,
    showstringspaces=false
]
# src/transformers/models/llama/modeling_llama.py

def apply_rotary_pos_emb(q, k, cos, sin, position_ids=None, unsqueeze_dim=1):
    cos = cos.unsqueeze(unsqueeze_dim)
    sin = sin.unsqueeze(unsqueeze_dim)

    q_quartile_size = q.shape[-1] // 4
    q1, q2, q3, q4 = torch.split(q, \
        split_size_or_sections=q_quartile_size, dim=-1)
    k_quartile_size = k.shape[-1] // 4
    k1, k2, k3, k4 = torch.split(k, \
        split_size_or_sections=k_quartile_size, dim=-1)

    q_rot = torch.cat((q1, q3), dim=-1)
    k_rot = torch.cat((k1, k3), dim=-1)

    q_rot_embed = (q_rot * cos) + (rotate_half(q_rot) * sin)
    k_rot_embed = (k_rot * cos) + (rotate_half(k_rot) * sin)

    q1_updated, q3_updated = torch.split(q_rot_embed, \
        split_size_or_sections=q_quartile_size, dim=-1)
    k1_updated, k3_updated = torch.split(k_rot_embed, \
        split_size_or_sections=k_quartile_size, dim=-1)

    q_embed = torch.cat((q1_updated, q2, q3_updated, q4), dim=-1)
    k_embed = torch.cat((k1_updated, k2, k3_updated, k4), dim=-1)


    return q_embed, k_embed
\end{lstlisting}

\subsection{Training, Model, and Evaluation Details}
\label{app:details}
\textbf{Evaluation:}
  During evaluation, we take care to avoid inducing out-of-distribution behavior not related to extended context length. In particular, we report point cloud behaviors ``with'' and ``without'' RoPE. Both cases are drawn from the same single forward pass from a given model, with the model unaltered and RoPE applied. Strictly speaking, these point clouds come from \textit{after} and \textit{before} the application of RoPE, respectively.
  This keeps observations
  within-distribution, even when discussing un-RoPEd point clouds inside of a RoPE-using model. Performing the actual attention without RoPE would cascade errors through the model.

\textbf{Training:}
Model training proceeds over 21 billion tokens, with a context length of 4096 and half a million tokens per minibatch. All models are trained across 16 NVIDIA A100s in parallel. We employ a learning rate of $3e-4$, with warmup over 2k steps and cosine decay. Optimizer is AdamW with $\lambda=(.9,.95)$ and weight decay $0.1$. Model architecture follows Llama3, with details provided in Table~\ref{tab:architectures}. 
  \begin{table}[h!]
    \centering
    \caption{Model architectures used for pretraining and evaluation}
    \label{tab:architectures}
    \begin{tabular}{c|cc}\toprule
         \textbf{Parameters}&  \textbf{1B} &  \textbf{3B}\\
         \midrule
         Vocab & 128256 & 128256\\
         Width & 1280 & 2048\\
         Layers & 32 & 48\\
         Heads & 16 & 16\\
         KV heads & 4 & 4\\
         Head dim & 80 & 128\\
         Inner dim & 4096 & 7168\\
         \bottomrule
    \end{tabular}
  \end{table}

\newpage
\subsection{\name HyperParameter Ablations}
\label{app:ablations}

We perform additional ablations on the hyperparameters of \name, namely the high and low frequencies, the number of channels with RoPE applied, and the degree of temperature scaling. We train several additional 1B demo models using the same training procedure, and report average scores from the RULER benchmark. 

We observe a consistent general trend: reducing the number of high-frequency channels improves performance, especially on shorter contexts. However, the model eventually reaches a threshold beyond which length generalization drops sharply. We hypothesize that without enough high-frequency channels, the model instead learns to encode position based on learned patterns of latent drift, which do not generalize to longer contexts. This explains the patterns observed in Table ~\ref{tab:ruler_1b_ablation} and Fig.~\ref{fig:differernt_channel}, where decreasing the number of RoPE channels, and increasing the wavelength of the highest frequency, gradually improves performance, until triggering a catastrophic collapse at longer contexts. We conclude that our hyperparameter choices for channel fraction and highest frequency (50\% of channels, shortest wavelength 32) represent a safe middle-ground. 

Halving the wavelength of the lowest frequency is highly beneficial, showing that one period is indeed not sufficient to decorrelate all rotating channels. This aligns with Fig.~\ref{fig:synthetics}, where the orange curve (RoPE with high enough frequency to complete one period) still performs some FSV decay beyond the training length. Meanwhile, \name, the red curve, holds the FSV ratio constant beyond the training length.

The impact of temperature scaling, shown in Table~\ref{tab:ruler_1b_ablation}, is minimal. Here we raise and lower the exponent of the YaRN temperature scaling formula, and find that the default value of 2 works fine.

\begin{figure}[h!]
    \centering
    \includegraphics[width=1.0\textwidth]{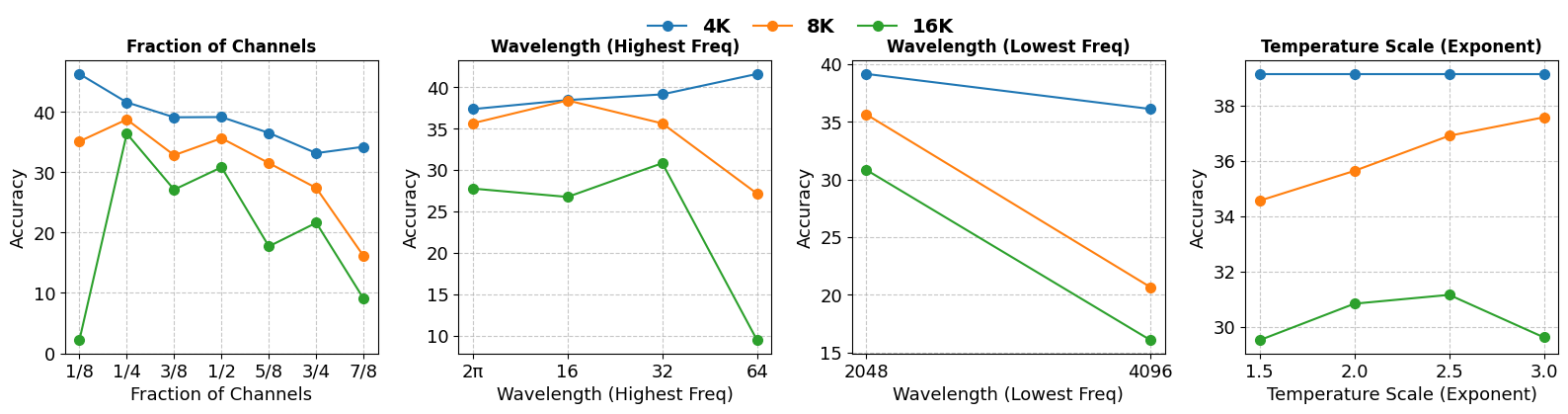}
    \vspace{-10pt}
    \caption{Llama 1B ablation studies with average RULER scores on the y-axis. The x-axis of the 4 plots cover fraction of channels, wavelength (high frequency), wavelength (low frequency) and temperature scaling, respectively. The color coded legend represents context lengths 4k, 8k and 16k.
    }
    \label{fig:differernt_channel}
\end{figure}

These results provide further insight when comparing \name to prior partial-RoPE methods, such as HoPE~\citep{chen2024hopenovelpositionalencoding}. 
While HoPE and \name are algorithmically similar, the differing justifications lead to better-informed design choices for \name. In particular, HoPE corresponds to several suboptimal settings in our ablations: no temperature scaling, a high frequency wavelength of $2\pi$, and a low frequency wavelength of 4k, all of which damage retrieval performance to varying degrees. The number of RoPE channels in HoPE coincidentally matches our setting (50\%) for the 1b model, but climbs to 80\% for 3B, which we find to also be suboptimal. 

\begin{table}[h!]
    \footnotesize
    \centering
    \caption{RULER scores for LlaMA 1B ablations as function of fraction of channels, wavelength, and temperature scaling}
    \begin{tabular}{c|ccc}
         \toprule
         \multicolumn{4}{c}{\textbf{Llama-1B Models}} \\ 
         \midrule
         \textbf{Fraction of}\\\textbf{Channels}&  \textbf{4K} &  \textbf{8K} & \textbf{16K}\\
         \midrule
         1/8 & 46.30 & 35.12 & 2.18\\
         1/4 & 41.59 & 38.75 & 36.41\\
         3/8 & 39.11 & 32.83 & 27.12\\
         \textbf{1/2} & 39.15 & 35.64 & 30.83\\
         5/8 & 36.52 & 31.52 & 17.72\\
         3/4 & 33.17 & 27.40 & 21.63\\
         7/8 & 34.23 & 16.11 & 9.05\\
         \bottomrule
         \toprule
         \textbf{Wavelength}\\\textbf{(Highest Freq)}&  \textbf{4K} &  \textbf{8K} & \textbf{16K}\\
         \midrule
         2$\pi$ & 37.37 & 35.66 & 27.77\\
         16 & 38.46 & 36.52 & 26.76\\
         \textbf{32} & 39.15 & 35.64 & 30.84\\
         64 & 41.62 & 27.16 & 9.46\\
         \bottomrule
         \toprule
         \textbf{Wavelength}\\\textbf{(Lowest Freq)}&  \textbf{4K} &  \textbf{8K} & \textbf{16K}\\
         \midrule
         \textbf{2048} & 39.15 & 35.64 & 30.83\\
         4096 & 36.11 & 20.66 & 16.07\\
         \bottomrule
         \toprule
         \textbf{Temperature Scale}\\\textbf{(Exponent)}&  \textbf{4K} &  \textbf{8K} & \textbf{16K}\\
         \midrule
         1.5 & 39.15 & 34.56 & 29.51\\
         \textbf{2.0} & 39.15 & 35.64 & 30.83\\
         2.5 & 39.15 & 36.91 & 31.15\\
         3.0 & 39.15 & 37.58 & 29.61\\
         \bottomrule
    \end{tabular}
    \label{tab:ruler_1b_ablation}
\end{table}

\subsection{RULER Benchmark}
\label{app:ruler_complete}
        
\begin{table}[H]
    \centering
    \caption{Performance of different methods on the RULER benchmark. Highest average score for each sequence length and model size is in \textbf{bold}; runner-up is \underline{underlined}.}
    \resizebox{\textwidth}{!}{
    \begin{tabular}{lr|rrrrrrrrrrrrr|r}
    \toprule
    \textbf{Method} & Seq. & N-S1 & N-S2 & N-S3& N-MK1 & N-MK2& N-MK3 & N-MV & N-MQ & VT & CWE & FWE & QA-1 & QA-2 & Avg. \\
    \midrule
    \multicolumn{16}{c}{\textbf{1B Models}} \\ 
    \midrule
    \multirow{3}{*}{RoPE} & 4k & 100& 100& 96.6& 73.6& 2.4& 3.4& 23.45& 24.65& 10.24& 21.76& 27.4& 22.67& 10.24& 39.72\\
             & 8k & 0& 0& 0& 0& 0& 0& 0& 0& 0& 0& 0& 0.13& 0& 0.01\\
             & 16k & 0& 0& 0& 0& 0& 0& 0& 0& 0& 0& 0& 0.4 & 0 & 0.03\\
    \midrule
    \multirow{3}{*}{High Frequency}  & 4k & 42& 44.8& 35& 37.4& 0.2& 3.6& 7.05& 4.85& 0& 0.1& 1& 16.95& 15.6
& 16.04\\
      & 8k & 18.8& 15.6& 15& 19.4& 0& 0.4& 6.35& 1.2& 0& 0.04& 2.87& 6.13& 13& 7.60\\
      & 16k & 9& 3& 1& 3& 0& 0& 1.2& 0.1& 0& 0& 0.2& 4.73& 8.6& 2.37\\
    \midrule
    \multirow{3}{*}{HalfRoPE}  & 4k & 100& 100& 96& 76.4& 22& 17.2& 12.2& 10.95& 3.88& 48.72& 27.73& 22.42& 22.4
& \textbf{43.07}\\
      & 8k & 0& 0& 0& 0& 0& 0& 0& 0& 0& 0& 0.07& 1.6& 0.2
& 0.14\\
      & 16k & 0& 0& 0& 0& 0& 0& 0& 0& 0& 0& 0& 0& 0& 0\\
    \midrule
    \multirow{3}{*}{YaRN}  & 4k & 100& 100& 96.6& 73.6& 2.4& 3.4& 23.45& 24.65& 10.24& 21.76& 27.4& 22.67& 17
& \underline{40.24}
\\
      & 8k & 97.6& 92.2& 92.4& 62& 1.6& 1.8& 22.2& 23.1& 21.8& 2.8& 17.07& 11.57& 16
& \underline{35.55}
\\
      & 16k & 99.8& 85.8& 68& 44.2& 1.6& 1.4& 20.35& 17.7& 18.24& 0.34& 12& 11.07& 12.8& \underline{30.25}\\

    \midrule
    \midrule
    \multirow{3}{*}{\name}  & 4k & 100 & 100 & 85.6 & 62 & 3 & 2.6 & 29.7 & 30.85 & 0.52 & 12.12 & 35.4 & 27.52 & 19.6& 39.15
\\
      & 8k & 100 & 100 & 33.4 & 40.4 & 0.2 & 0.6 & 21.8 & 20.55 & 11.44 & 2.44 & 29.33 & 10.28 & 15.8 & 29.71
\\
      & 16k & 96.4 & 19.4 & 1.2 & 14.2 & 0 & 0 & 9.65 & 2.1 & 2.36 & 0 & 20.33 & 8.73 & 11.4 & 14.29\\

    \midrule
    \multirow{3}{*}{\name (Scaling)}  & 4k & 100& 100& 85.6& 62& 3& 2.6& 29.7& 30.85& 0.52& 12.12& 35.4& 27.52& 19.6& {39.15}
\\
      & 8k & 100 & 100 & 70.6 & 50.8 & 1.4 & 1.2 & 26.3 & 34.55 & 12.8 & 6.16 & 33.13 & 11.02 & 15.4 & \textbf{35.64}
\\
      & 16k & 100 & 98.6 & 37 & 40.8 & 1.2 & 0.4 & 29.95 & 23.5 & 6.64 & 0.38 & 32.13 & 15.82 & 14.4 & \textbf{30.83}\\

    \midrule
    \multicolumn{16}{c}{\textbf{3B Models}} \\ 
    \midrule
    \multirow{3}{*}{RoPE} & 4k & 100& 100& 93.4& 56& 7.2& 3.8& 39.55& 53.55& 12.96& 41.28& 36.07& 30.25& 26.4& \underline{46.19}\\
             & 8k & 0& 0& 0& 0& 0& 0& 0& 0& 0& 0& 0.07& 1.8& 0& 0.14\\
             & 16k & 0& 0& 0& 0& 0& 0& 0& 0& 0& 0& 0& 0.13& 0& 0.01\\
    \midrule
    \multirow{3}{*}{High Frequency}& 4k & 59.8& 57.4& 47.8& 26.89& 40.6& 18.4& 18.4& 17.6& 0& 13.9& 13.2& 29.53& 20.8& 28.02\\
      & 8k & 27.8& 24.4& 25.6& 23.8& 1.8& 4.8& 17.45& 15.4& 0& 10.72& 8.2& 9.4& 16.6& 14.31\\
      & 16k & 8.8& 9.2& 5.2& 7.8& 0& 0.4& 7.65& 2.05& 0& 1.58& 5.53& 7.27& 11.4& 5.14\\
    \midrule
    \multirow{3}{*}{HalfRoPE}  & 4k & 100& 100& 99.6& 85& 5.6& 4.6& 58.9& 55.7& 15.8& 44& 38.4& 33.4& 25.6
& \textbf{51.28}
\\
      & 8k & 0& 0& 0& 0& 0& 0& 0& 0& 0& 0.82& 3.2& 0.93& 0.2& 0.40\\
      & 16k & 0& 0& 0& 0& 0& 0& 0& 0& 0& 0& 0& 0.13& 0.2& 0.03\\
    \midrule
    \multirow{3}{*}{YaRN}  & 4k & 100& 98.2& 91.8& 57.2& 9.8& 4.8& 64.3& 38.6& 20.88& 14.66& 21.27& 28& 21.2
& 43.90
\\
      & 8k & 100 & 100 & 98.6 & 71 & 1.8 & 4.4 & 42.25 & 60.25 & 20.32 & 13.94 & 29.8 & 17.18 & 26.6 & \textbf{45.09}
\\
      & 16k & 100 & 99.8 & 92.6 & 63.4 & 7.2 & 1.4 & 34.55 & 47.65 & 11.76 & 8.66 & 18.8 & 15.82 & 20.2 & \underline{40.14}\\

    \midrule
    \midrule
    \multirow{3}{*}{\name}& 4k & 100& 100& 97.8& 77.4& 9.4& 8.4& 28.2& 37.25& 12.84& 13.18& 42.27& 30.88& 25.6
& 44.86
\\
      & 8k & 100& 99& 88.6& 45.4& 0.6& 1.8& 23.7& 36.5& 13.4& 6.54& 36.4& 14.28& 21.4
& 37.51
\\
      & 16k & 73.6& 73.4& 13.2& 32.2& 0& 0.2& 8.25& 8.4& 18.72& 3.72& 26.8& 12.87& 14& 21.95\\

    \midrule
    \multirow{3}{*}{\name (Scaling)}& 4k & 100 & 100 & 97.8 & 77.4 & 9.4 & 8.4 & 28.2 & 37.25 & 12.84 & 13.18 & 42.27 & 30.88 & 25.6
& 44.86
\\
      & 8k & 100 & 100 & 96 & 55.8 & 6.6 & 3.4 & 43.4 & 48.9 & 20.6 & 12.36 & 37 & 15.62 & 24.4
& \underline{43.39}
\\
      & 16k & 90.8 & 99.2 & 86 & 59.6 & 7.2 & 0.4 & 48 & 42.4 & 36.56 & 10.24 & 31.07 & 17.07 & 17.4 
& \textbf{42.0}\\

    \bottomrule
    \end{tabular}

    }
\end{table}
\clearpage

\subsection{LongBench Benchmark}
\label{app:longbench_complete}
    14 English Tasks for 5 Different Categories used for evaluation:
    
    \textbf{Single Document QA}: NarrativeQA, Qasper, and MultiFieldQA-en\\
    \textbf{Multi-Document QA}: 2WikiMultihopQA, HotpotQA, MuSiQue\\
    \textbf{Summarization}: GovReport, MultiNews, QMSum\\
    \textbf{Few-shot Learning}: SAMSum, TREC, TriviaQA\\
    \textbf{Code Completion}: LCC, RepoBench-P\\
    \begin{table}[!ht]
\centering
\caption{Performance of different methods on LongBench, averaged by task type. Highest total average score for each sequence length and model size is in \textbf{bold}; runner-up is \underline{underlined}.}
\label{tab:longbench}
\resizebox{0.9\textwidth}{!}{
\begin{tabular}{lc|ccccc|r}

\toprule

\textbf{Methods} & \bf Seq. & \textbf{Single-Doc QA} & \textbf{Multi-Doc QA} & \textbf{Summarization} & \textbf{Few-Shot Learning} & \textbf{Code} & \bf Avg.\\ 

\midrule
\multicolumn{8}{c}{\textbf{1B Models}} \\ 
\midrule

\multirow{3}{*}{RoPE} & 4k & 6.65 & 4.32 & 14.22 & 27.27 & 23.55 & 14.61\\
 & 8k & 4.53 & 1.69 & 11.48 & 7.55 & 19.75 & 8.23    \\
 & 16k & 4.84 & 2.08 & 12.74 & 8.19 & 19.33 & 8.73   \\

\midrule

\multirow{3}{*}{High Frequency} & 4k & 5.26 & 4.22 & 12.97 & 19.3 & 19.96 & 11.8    \\
 & 8k & 5.31 & 4.22 & 12.61 & 17.61 & 20.47 & 11.44    \\
 & 16k & 5.2 & 3.92 & 12.49 & 16.71 & 19.77 & 11.04   \\

\midrule

\multirow{3}{*}{HalfRoPE} & 4k & 6.94 & 4.72 & 16.07 & 28.73 & 22.99 & \underline{15.38}    \\
 & 8k & 4.96 & 1.94 & 15.89 & 6.78 & 16.77 & 8.73    \\
 & 16k & 5.25 & 2 & 16.17 & 6.5 & 17.17 & 8.86   \\

\midrule

\multirow{3}{*}{YaRN}& 4k & 6.86 & 4.35 & 14.94 & 28.07 & 22.54 & 14.84    \\
 & 8k & 6.83 & 4.77 & 15.29 & 26.76 & 21.33 & \underline{14.54}    \\
 & 16k & 6.78 & 5.18 & 15.26 & 25.59 & 19.42 & \underline{14.09}   \\

\midrule
\midrule

\multirow{3}{*}{\name (scaling)} & 4k & 6.71 & 4.80 & 14.61 & 31.68 & 24.11 & \textbf{15.83}\\
 & 8k & 7.18 & 5.23 & 14.69 & 31.39 & 23.11 & \textbf{15.83}\\
 & 16k & 7.01 & 5.35 & 15.40 & 30.55 & 23.12 & \textbf{15.80}\\  

\midrule
\multicolumn{8}{c}{\textbf{3B Models}} \\ 
\midrule

\multirow{4}{*}{RoPE} & 4k & 7.32 & 4.81 & 14.79 & 35.13 & 37.28 & \underline{18.62}\\
 & 8k & 5.19 & 2.57 & 13 & 11.03 & 31.84 & 11.36    \\
 & 16k & 5.12 & 2.51 & 11.67 & 11.22 & 27.15 & 10.42   \\

\midrule

\multirow{3}{*}{High Frequency} & 4k & 5.42 & 4.52 & 12.61 & 25.85 & 26.69 & 14.19    \\
 & 8k & 5.82 & 4.6 & 12.64 & 23.06 & 27.58 & 13.82    \\
 & 16k & 5.93 & 4.51 & 12.51 & 22.7 & 28.02 & 13.78   \\

\midrule

\multirow{3}{*}{HalfRoPE} & 4k & 7.75 & 4.81 & 17.14 & 40.68 & 30.4 & \textbf{19.42}    \\
 & 8k & 5.13 & 2.23 & 16.81 & 9.84 & 23.89 & 10.7    \\
 & 16k & 4.99 & 2.23 & 16.11 & 10.87 & 23.03 & 10.62   \\

\midrule

\multirow{3}{*}{YaRN}& 4k & 6.5 & 4.96 & 15.2 & 29.74 & 26.50 & 15.87    \\
 & 8k & 7.96 & 5.46 & 15.57 & 40.30 & 31.12 & \textbf{19.29}    \\
 & 16k & 8 & 5.94 & 16.45 & 42 & 28.81 & \textbf{19.63}   \\

\midrule
\midrule

\multirow{3}{*}{\name (scaling)} & 4k & 7.71 & 5.23 & 13.51 & 32.83 & 22.50 & {15.92}    \\
& 8k & 8.52 & 5.65 & 14.55 & 36.37 & 22.24 & \underline{17.13}\\
& 16k & 8.96 & 6.23 & 15.90 & 37.93 & 22.03 & \underline{17.94} \\

\bottomrule
\end{tabular}
    }
\end{table}

\newpage
\subsection{Long-Context Fine-Tuning with \name}
\label{app:finetuning}

While our analysis focuses on \name as a tuning-free approach to length generalization, we can also combine it with fine-tuning to further extend the effective context length. Here we take our trained Llama 1B models and tune them to 128k context length in stages: first, we load the 4k model checkpoint and apply any relevant RoPE frequency scaling. Then, we continue pretraining for 5k steps, with sequence length increased to 32k. The total tokens per batch is held constant at 500k, and learning rate warms up over 250 steps until it reaches the final LR of the previous checkpoint ($3e-5$), where it is held constant. We then repeat the process for another 5k steps, going from 32k sequence length to 128k. 

We extend three models in this fashion: first, we apply YaRN scaling to the baseline RoPE model during each jump in sequence length. Second, we tune the \name model with no adjustment to RoPE frequencies. Third, we tune the \name model, but with YaRN-style scaling also applied. YaRN cannot be applied directly to \name models as the default YaRN hyperparameters do not work for such high frequencies. We therefore set the scaling thresholds to the highest and lowest frequencies and interpolate in-between (the highest frequency is unchanged, and the lowest frequency scales up by $L'/L$, where $L',L$ are the new and old sequence lengths, respectively). 

Results for RULER are given in Table~\ref{tab:ruler_tune_128k} and Fig.~\ref{fig:finetuning}. The tuned \name model exhibits better length generalization at context length 64k and above. Notably, the combination of \name and YaRN-style scaling achieves superior performance at all context lengths, compared to either method alone. This suggests that YaRN and RoPE-ID do not address the same out-of-distribution behaviors, indicating possible synergy between these approaches. We leave such exploration for future work.

\begin{table}[h!]
\footnotesize
\centering
\caption{RULER scores for LlaMA 1B models up to 128k context length, where experiments cover tuning with YaRN, RoPE-ID, and RoPE-ID + YaRN.}
\begin{tabular}{c|ccc}\toprule
\multicolumn{4}{c}{\textbf{Llama-1B Models}} \\ 
\midrule
\textbf{Fine-Tuning}&  \textbf{YaRN} &  \textbf{RoPE-ID} & \textbf{RoPE-ID}\\&&&\textbf{+ YaRN}\\
\midrule
4K & 43.29 & 39.49 & 47.09\\
8K & 39.27 & 35.81 & 41.94\\
16K & 34.55 & 30.63 & 38.59\\
32K & 32.74 & 30.66 & 36.16\\
64K & 21.90 & 27.34 & 31.48\\
128K & 12.25 & 19.78 & 29.23\\
\bottomrule
\end{tabular}
\label{tab:ruler_tune_128k}
\end{table}


\newpage
\subsection{Repeated Clustering Analysis for Trained Models}

We repeat our original analysis on our trained 1B models (baseline and \name). Our baseline model exhibits the same behavior observed in state of the art LLMs, while our \name model exhibits the desired stable behavior and consistent clustering across sequence lengths.

\label{app:analysis}
\begin{figure}[h!]
      \centering
      \includegraphics[width=0.9\textwidth]{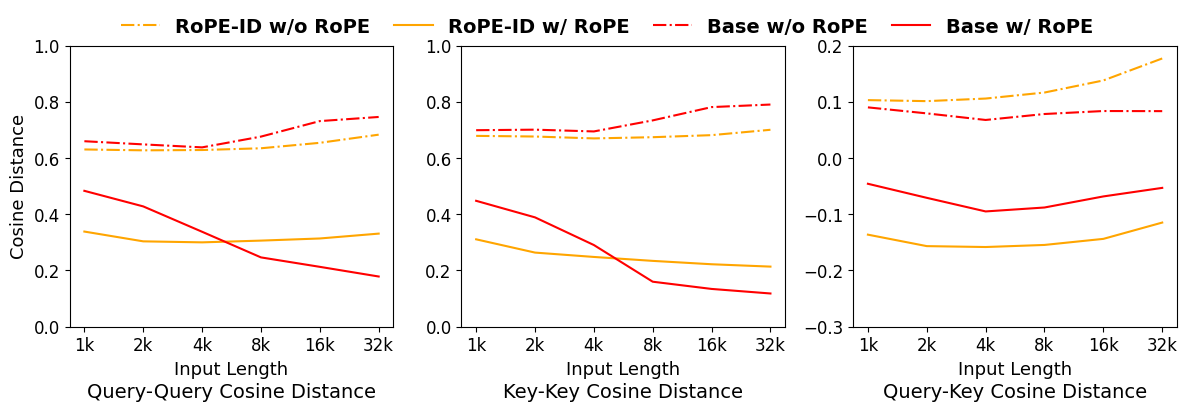}
      \vspace{-10pt}
      \caption{Pairwise angular distances within- and between-clusters for our trained models. Base model matches prior LLM observations, including an inflection point for key-query product with RoPE at the training length (4k). \name maintains stable behavior over time.}
    \label{fig:angles_1b}
\end{figure}
  
    \begin{figure}[h!]
      \centering
      \includegraphics[width=0.9\textwidth]{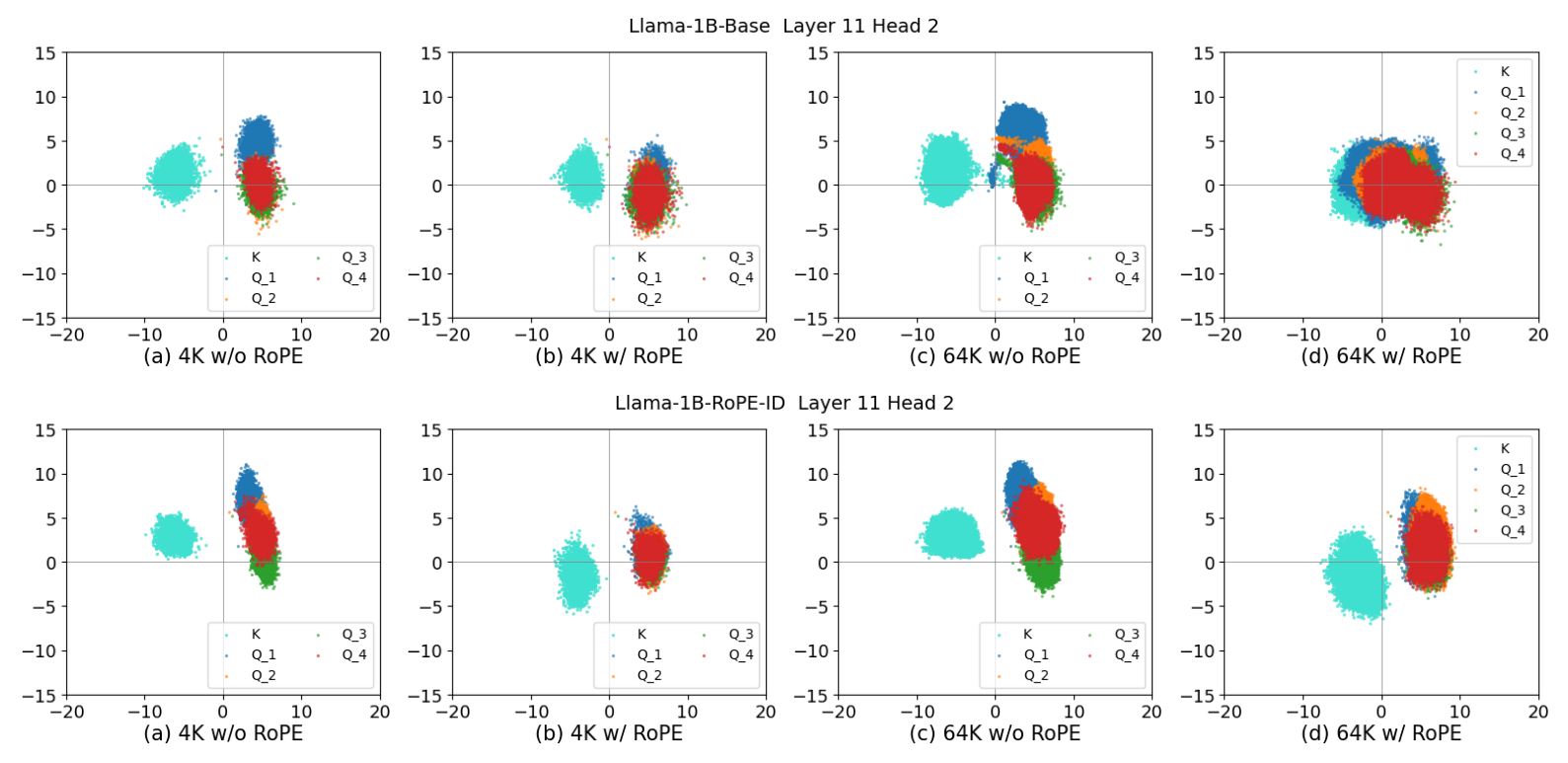}
      \vspace{-10pt}
      \caption{Clustering behavior for our trained models. Vanilla transformer (top) matches prior LLM observations; \name (bottom) maintains stable cluster separation.}
    \label{fig:pca_1b}
    \end{figure}

\newpage
\subsection{Additional Cluster Visualizations}
\label{app:morepca}
We repeat Fig.~\ref{fig:pca} for additional layers and heads. The same general trend can be observed, where separated clusters disperse and overlap when RoPE is applied at longer contexts. We randomly sample 16 Key heads and their corresponding Query heads from  Llama3-8B-Instruct, and 8 Query-Key head pairs from Gemma-7b and Olmo-7b.

\begin{figure}[h!]
    \centering
    \includegraphics[width=1.0\textwidth]{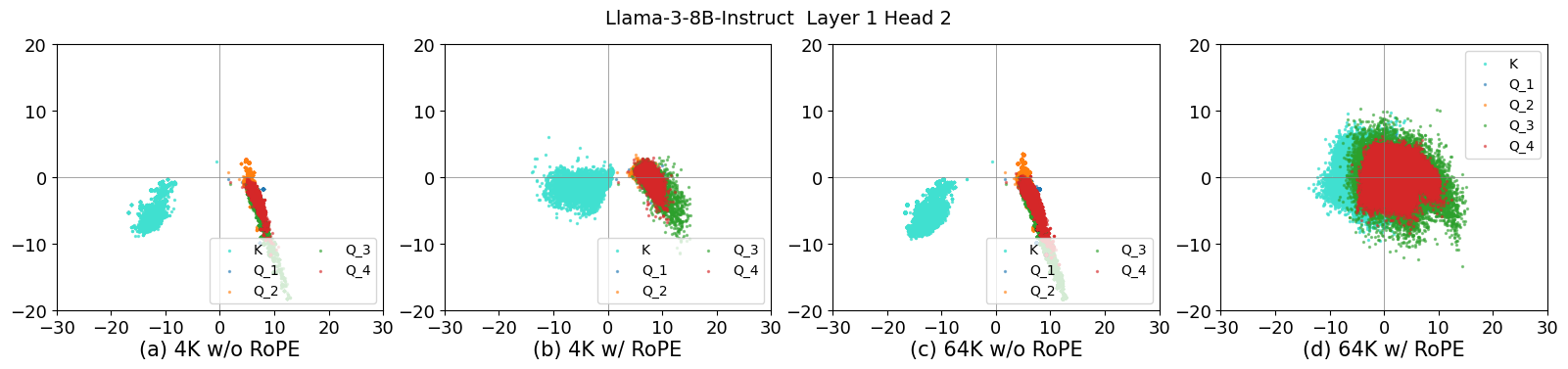}
\end{figure}

\vspace{-20pt}
\begin{figure}[h!]
    \centering
    \includegraphics[width=1.0\textwidth]{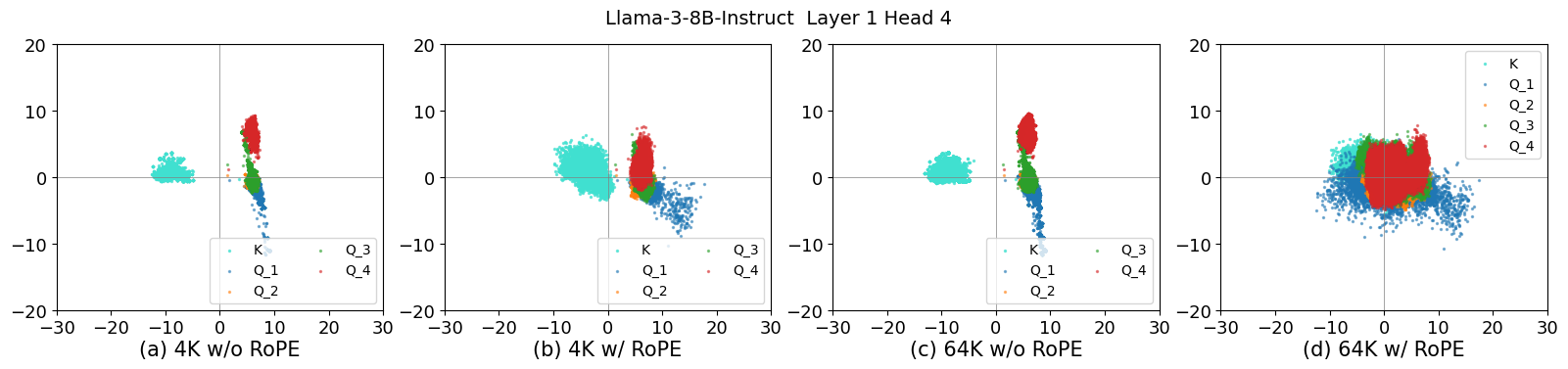}
\end{figure}

\vspace{-20pt}
\begin{figure}[h!]
    \centering
    \includegraphics[width=1.0\textwidth]{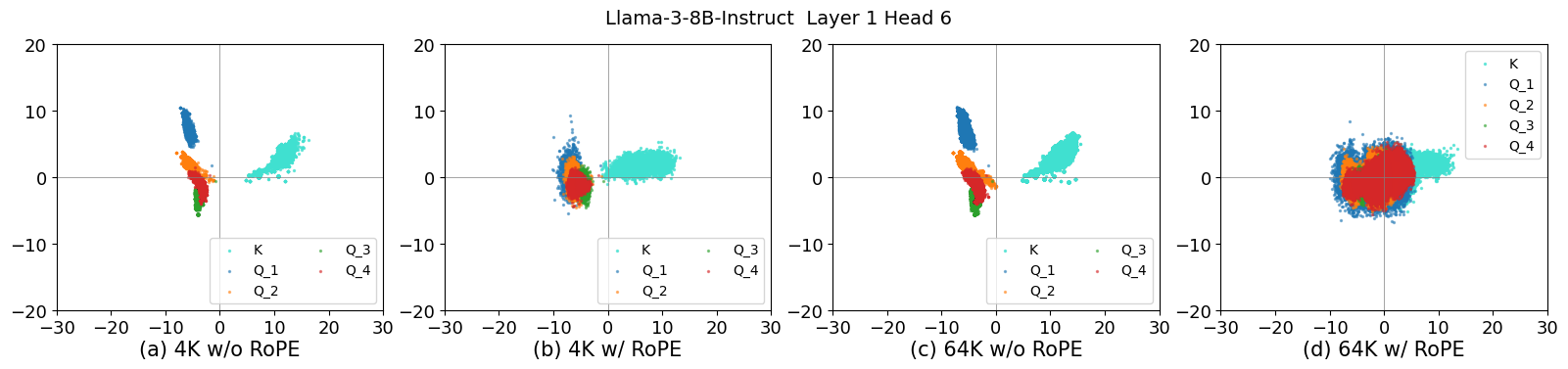}

\end{figure}

\vspace{-20pt}
\begin{figure}[h!]
    \centering
    \includegraphics[width=1.0\textwidth]{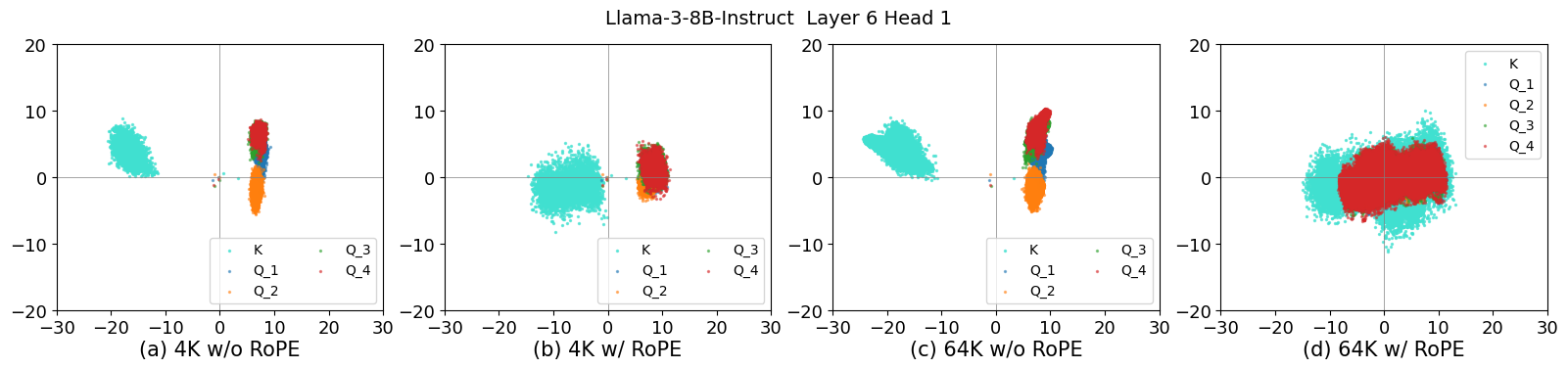}

\end{figure}

\vspace{-20pt}
\begin{figure}[h!]
    \centering
    \includegraphics[width=1.0\textwidth]{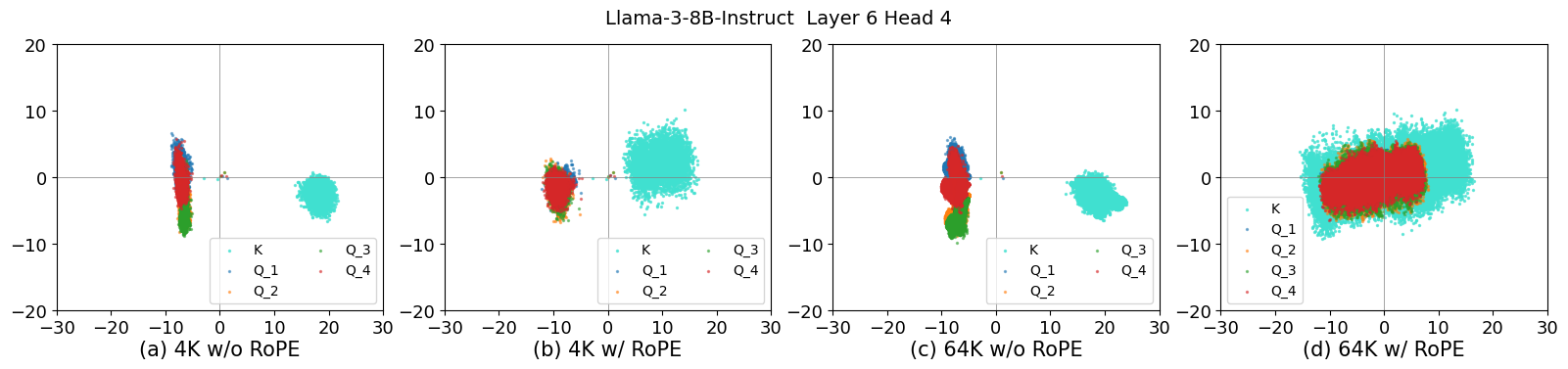}

\end{figure}

\vspace{-20pt}
\begin{figure}[h!]
    \centering
    \includegraphics[width=1.0\textwidth]{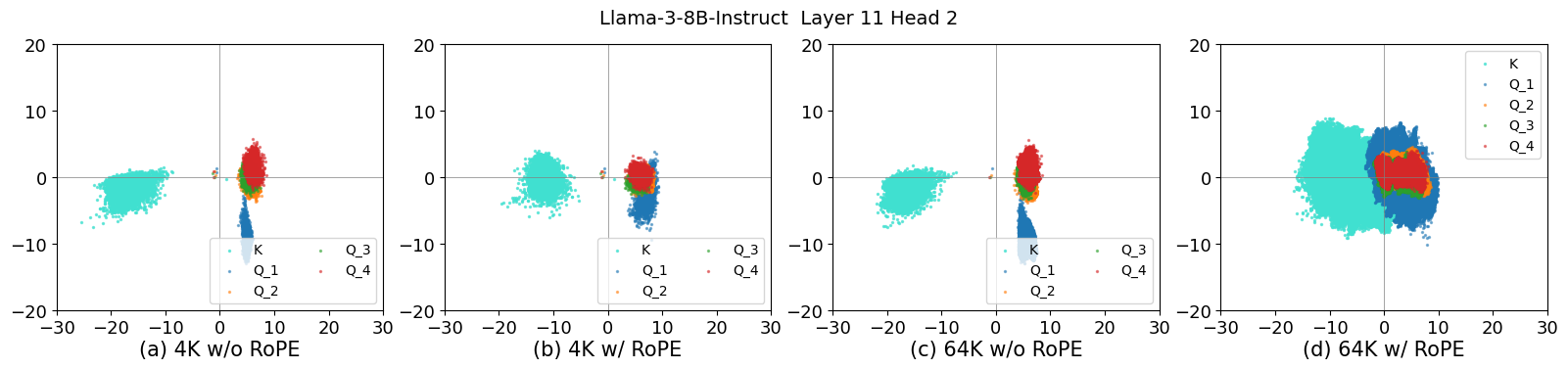}

\end{figure}

\vspace{-30pt}
\begin{figure}[h!]
    \centering
    \includegraphics[width=1.0\textwidth]{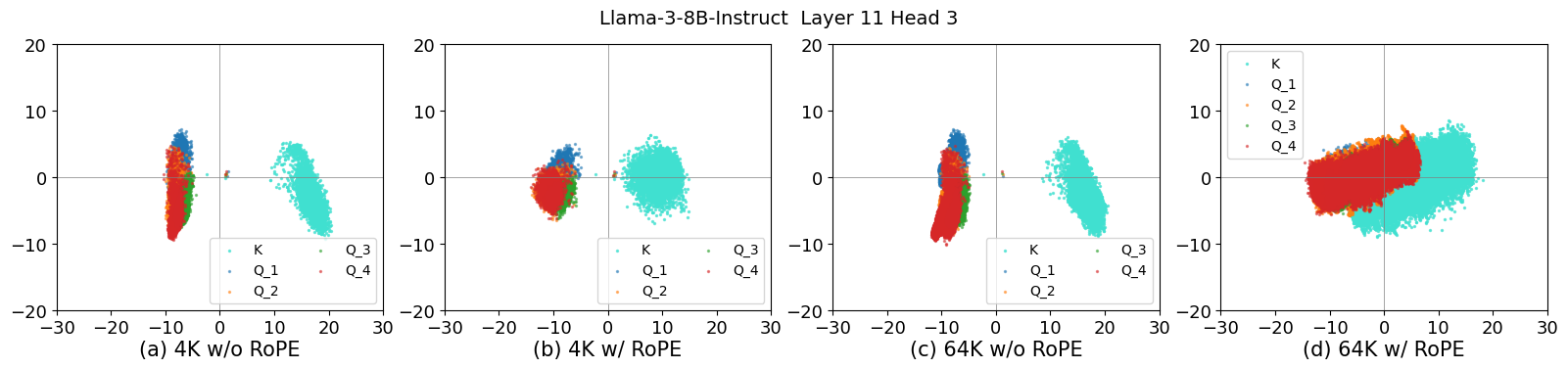}

\end{figure}

\vspace{-30pt}
\begin{figure}[h!]
    \centering
    \includegraphics[width=1.0\textwidth]{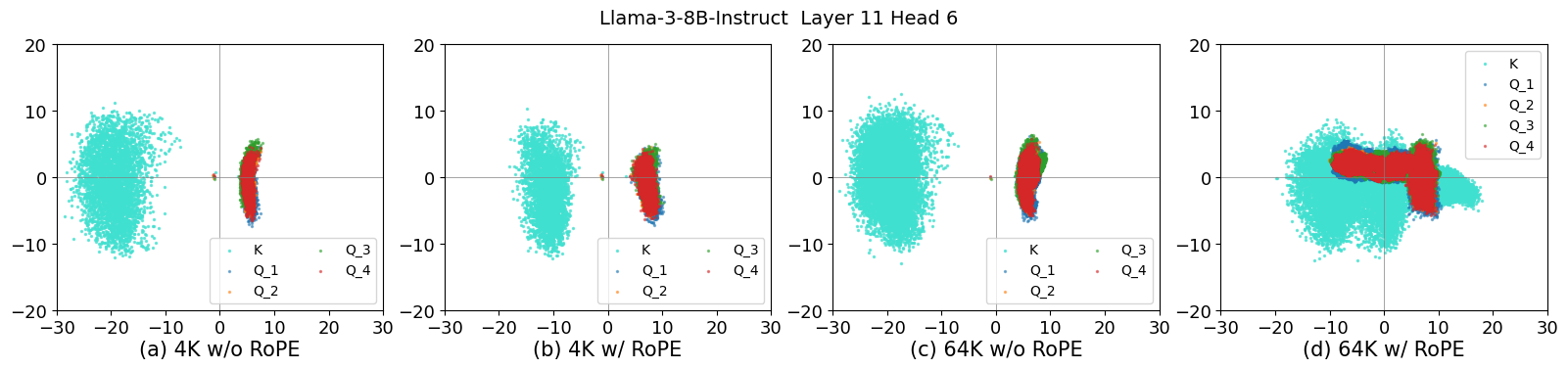}

\end{figure}
\vspace{-30pt}
\begin{figure}[h!]
    \centering
    \includegraphics[width=1.0\textwidth]{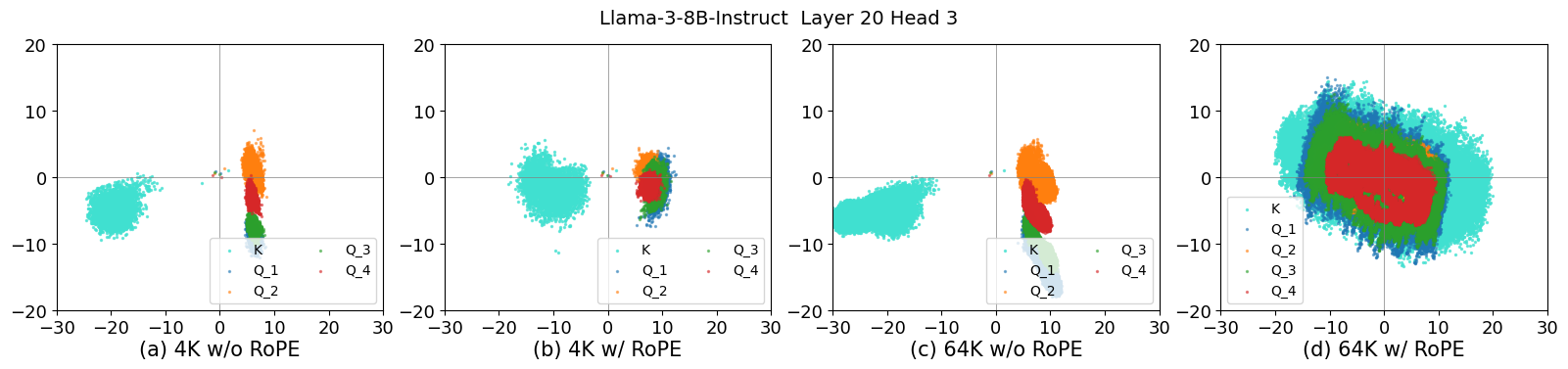}

\end{figure}
\vspace{-30pt}
\begin{figure}[h!]
    \centering
    \includegraphics[width=1.0\textwidth]{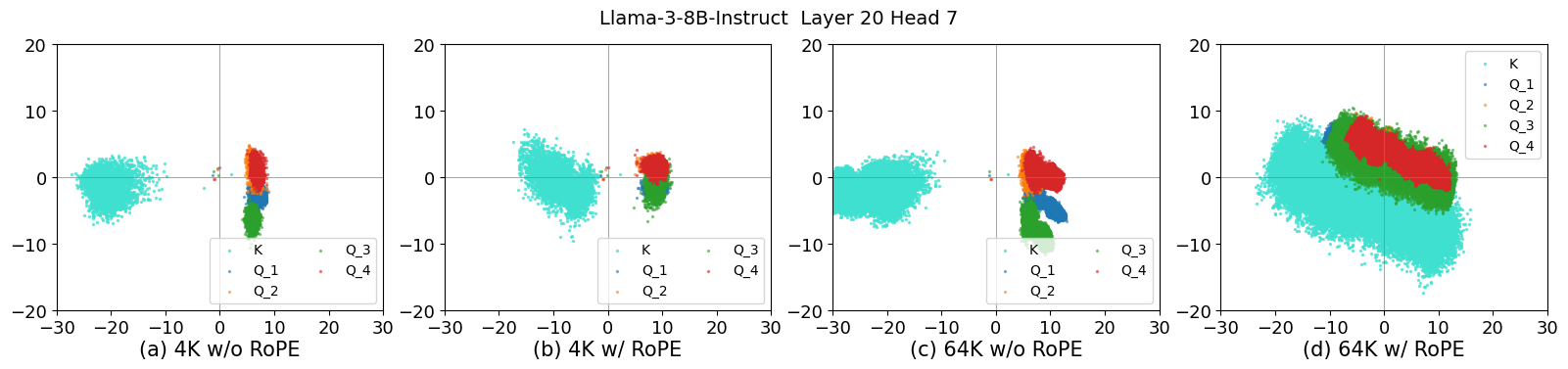}
\end{figure}

\vspace{-30pt}
\begin{figure}[h!]
    \centering
    \includegraphics[width=1.0\textwidth]{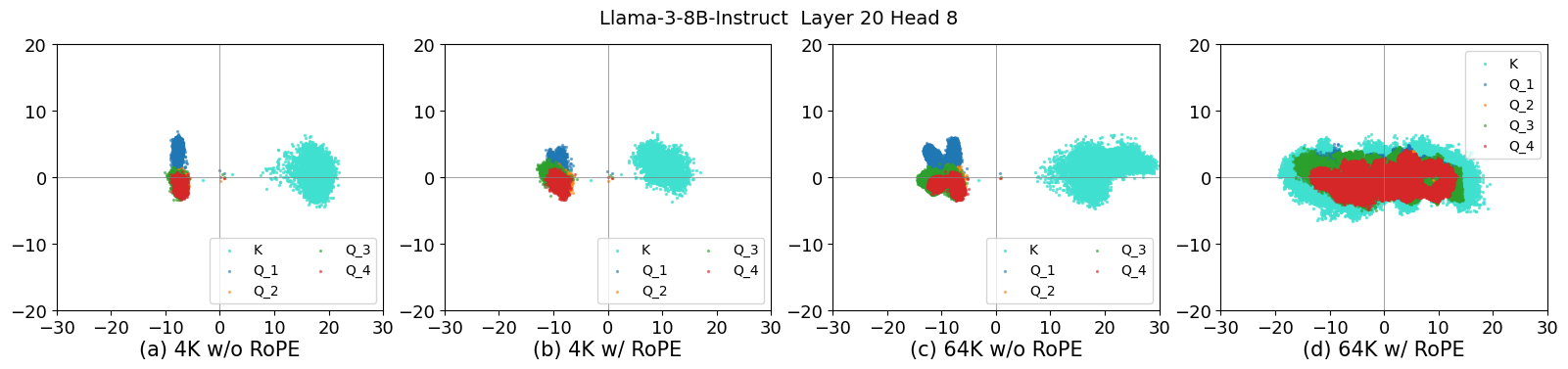}
\end{figure}

\vspace{-30pt}
\begin{figure}[h!]
    \centering
    \includegraphics[width=1.0\textwidth]{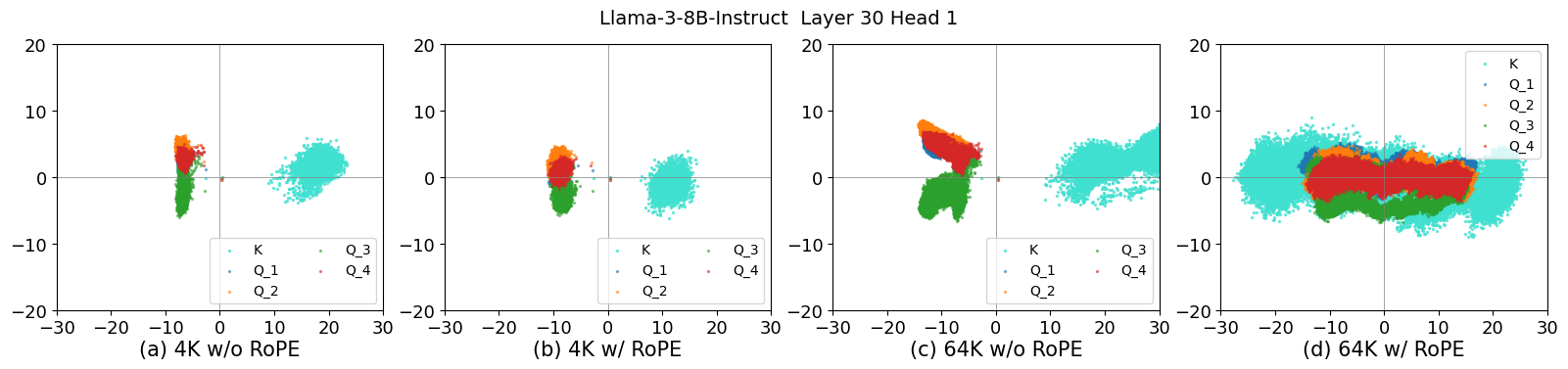}
\end{figure}

\vspace{-30pt}
\begin{figure}[h!]
    \centering
    \includegraphics[width=1.0\textwidth]{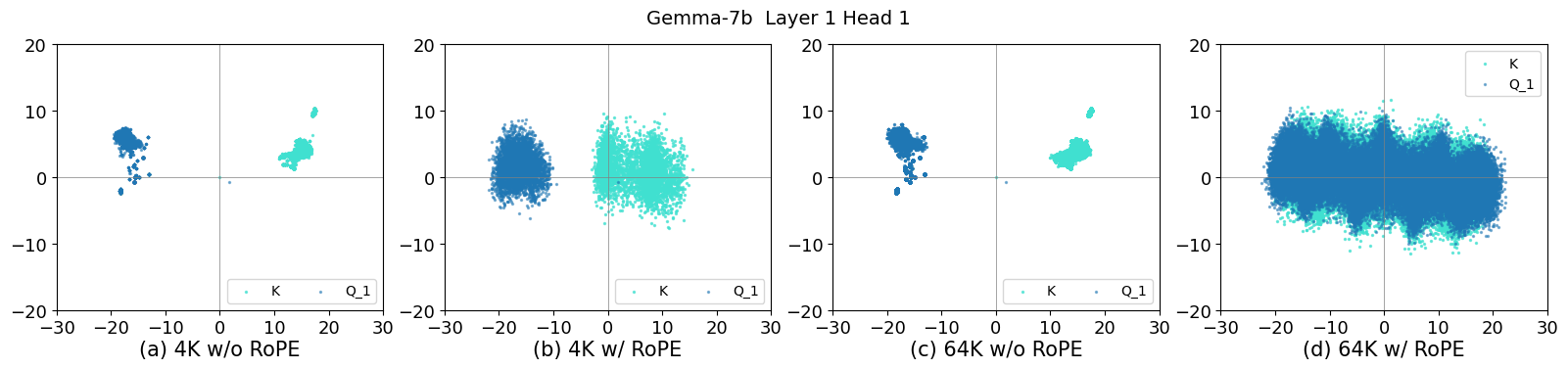}
\end{figure}
\vspace{-30pt}
\begin{figure}[h!]
    \centering
    \includegraphics[width=1.0\textwidth]{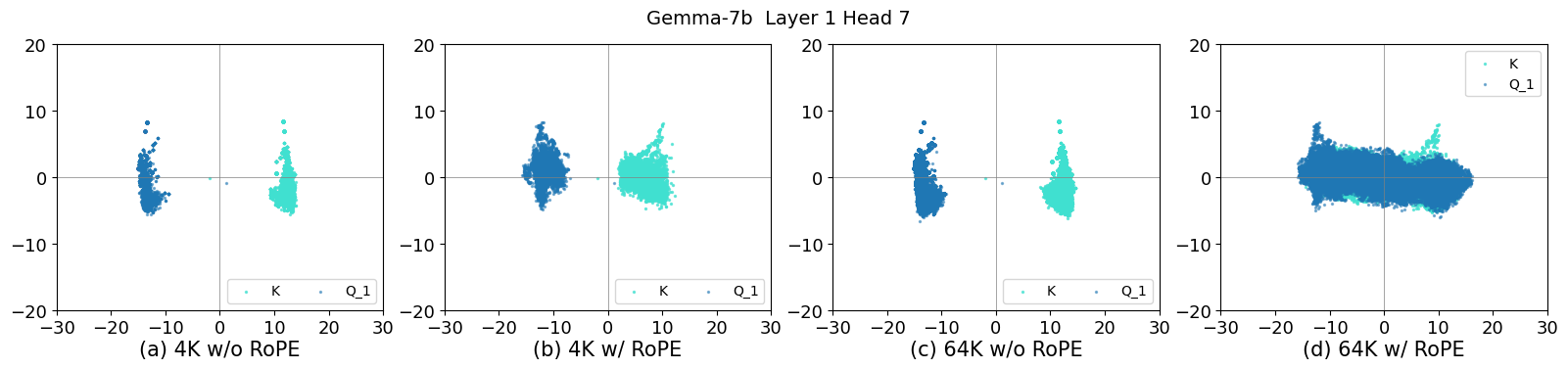}
\end{figure}

\vspace{-30pt}
\begin{figure}[h!]
    \centering
    \includegraphics[width=1.0\textwidth]{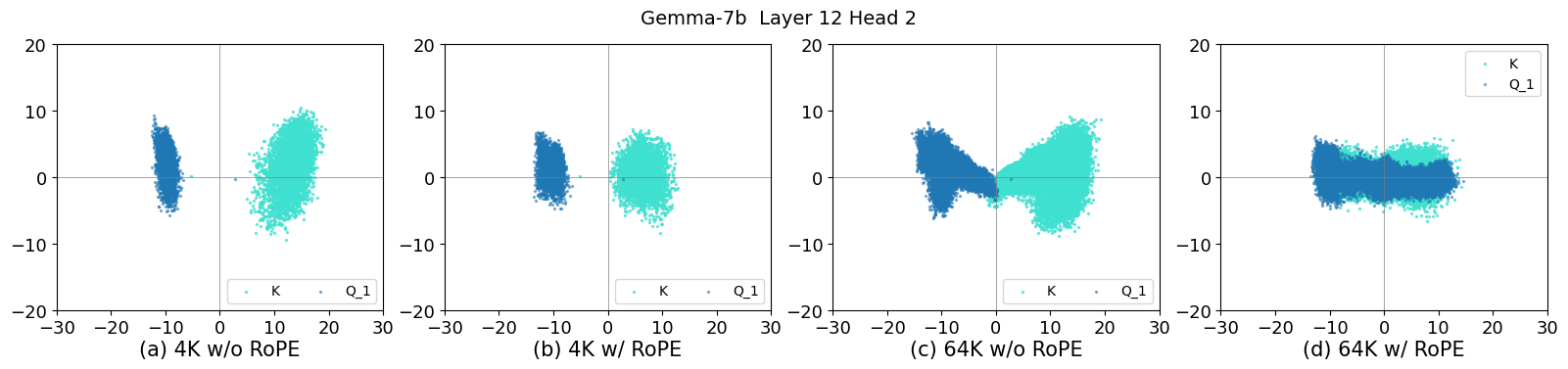}
\end{figure}

\vspace{-30pt}
\begin{figure}[h!]
    \centering
    \includegraphics[width=1.0\textwidth]{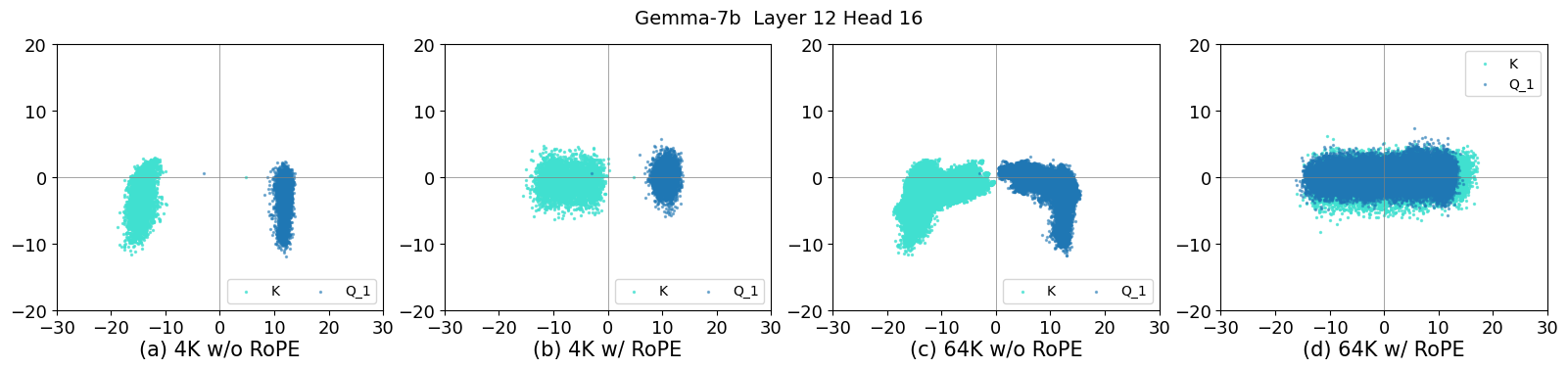}
\end{figure}

\vspace{-30pt}
\begin{figure}[h!]
    \centering
    \includegraphics[width=1.0\textwidth]{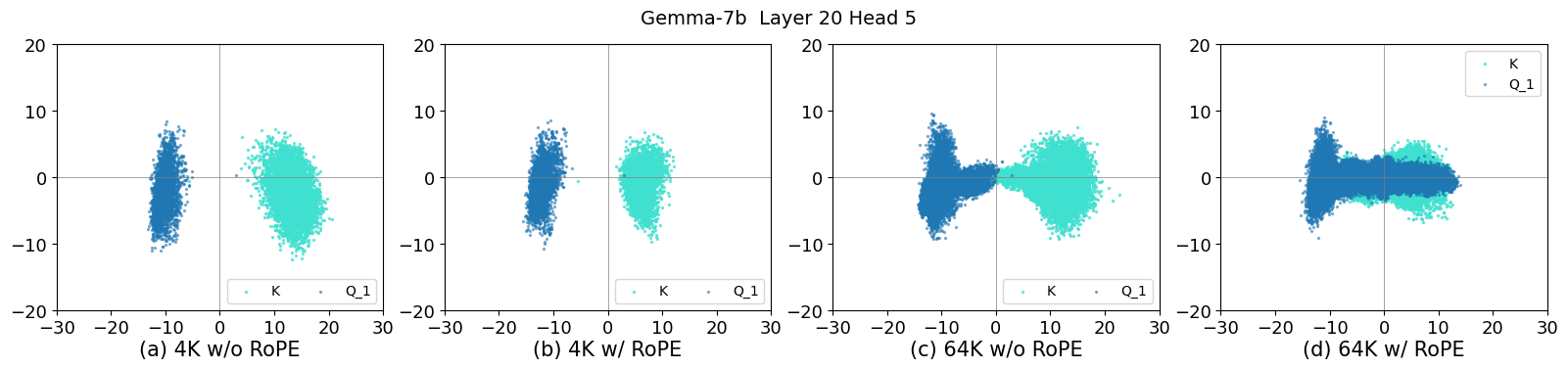}
\end{figure}

\vspace{-30pt}
\begin{figure}[h!]
    \centering
    \includegraphics[width=1.0\textwidth]{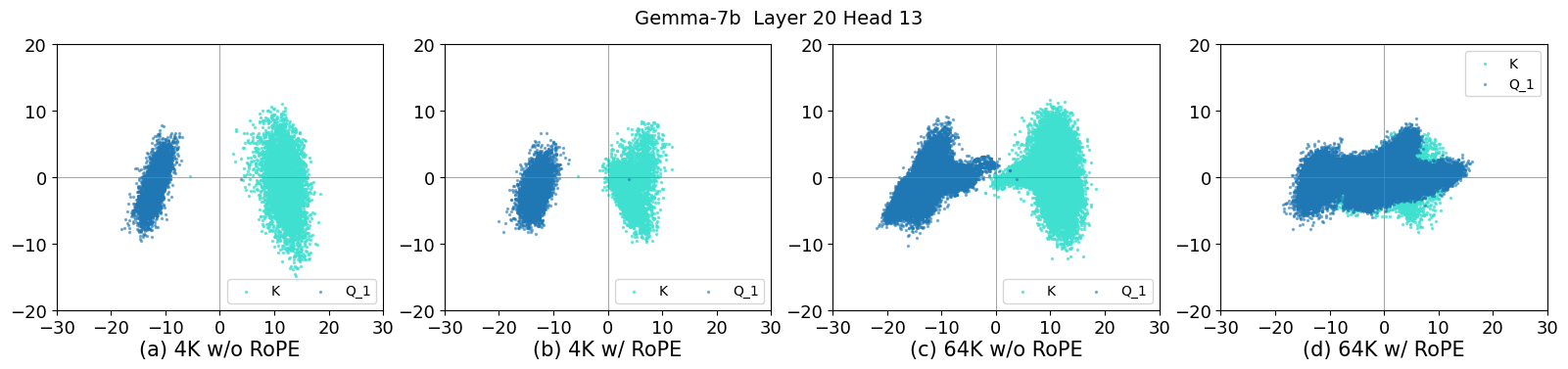}
\end{figure}

\vspace{-30pt}
\begin{figure}[h!]
    \centering
    \includegraphics[width=1.0\textwidth]{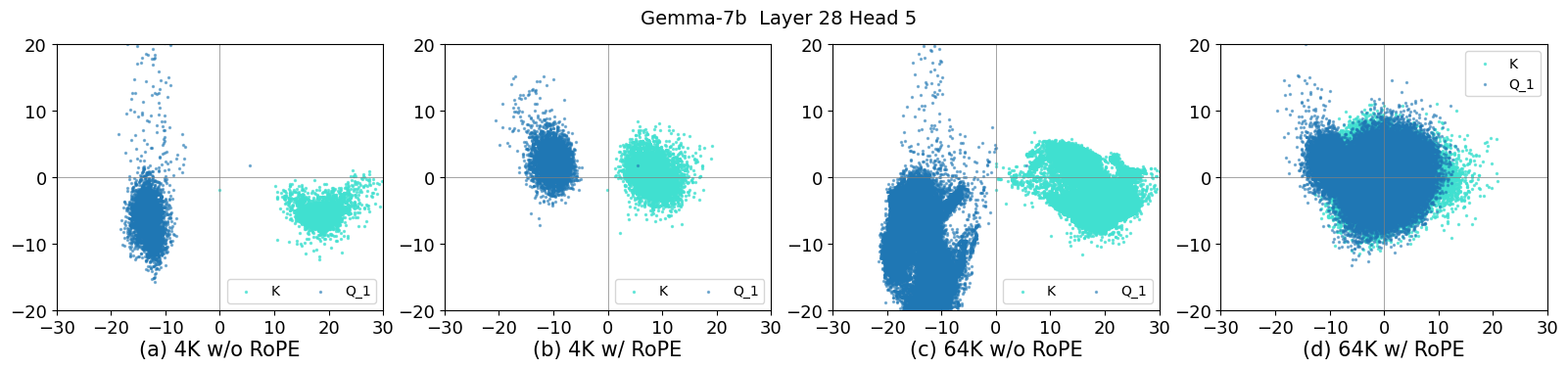}
\end{figure}

\vspace{-30pt}
\begin{figure}[h!]
    \centering
    \includegraphics[width=1.0\textwidth]{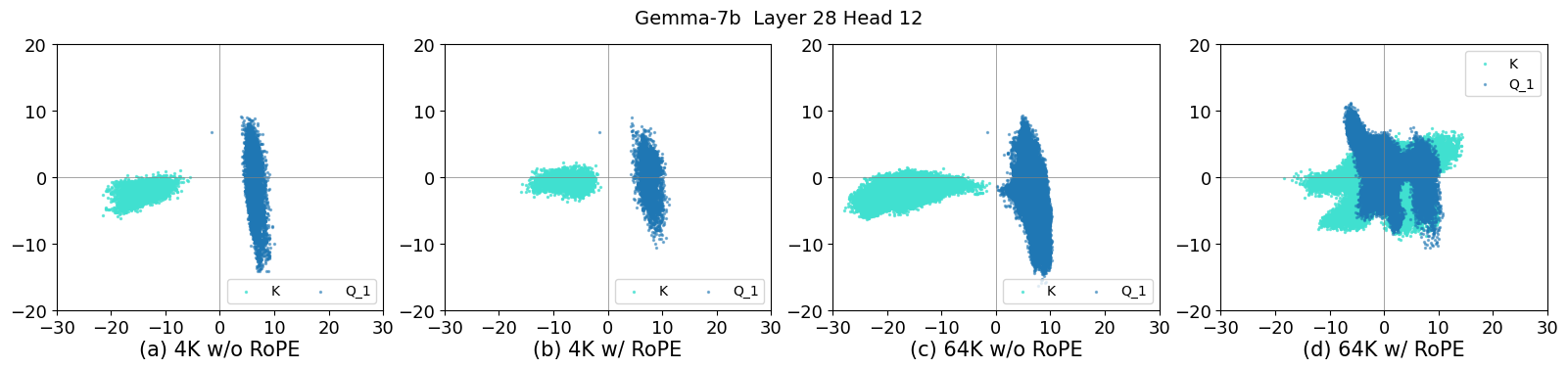}
\end{figure}

\vspace{-30pt}
\begin{figure}[h!]
    \centering
    \includegraphics[width=1.0\textwidth]{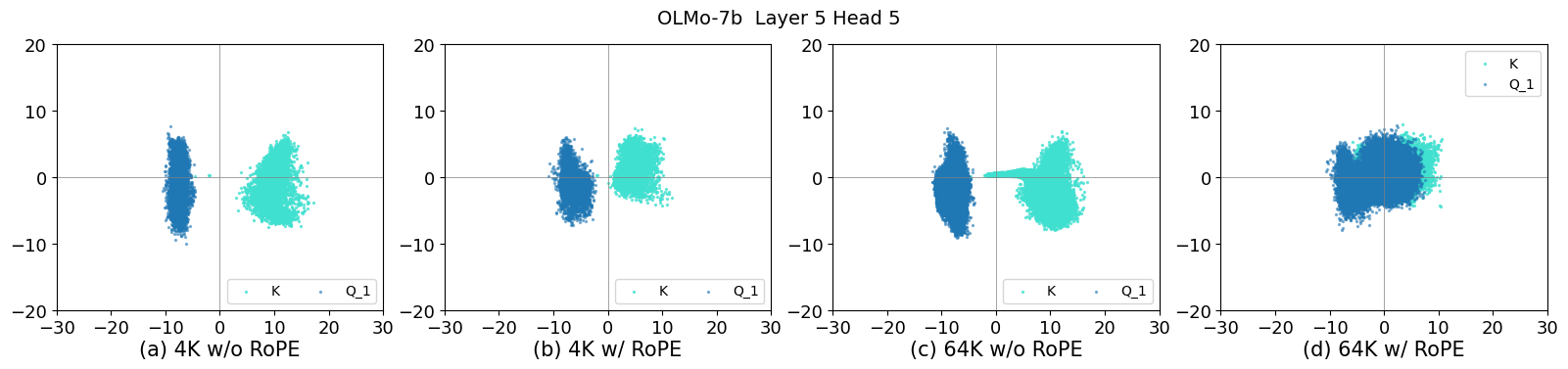}
\end{figure}
\vspace{-30pt}
\begin{figure}[h!]
    \centering
    \includegraphics[width=1.0\textwidth]{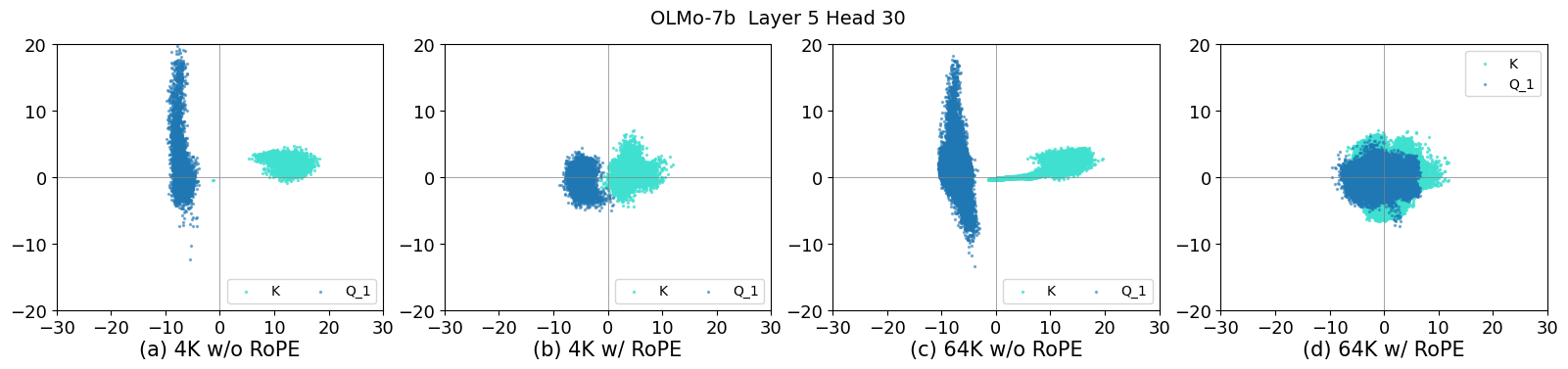}
\end{figure}
\vspace{-30pt}
\begin{figure}[h!]
    \centering
    \includegraphics[width=1.0\textwidth]{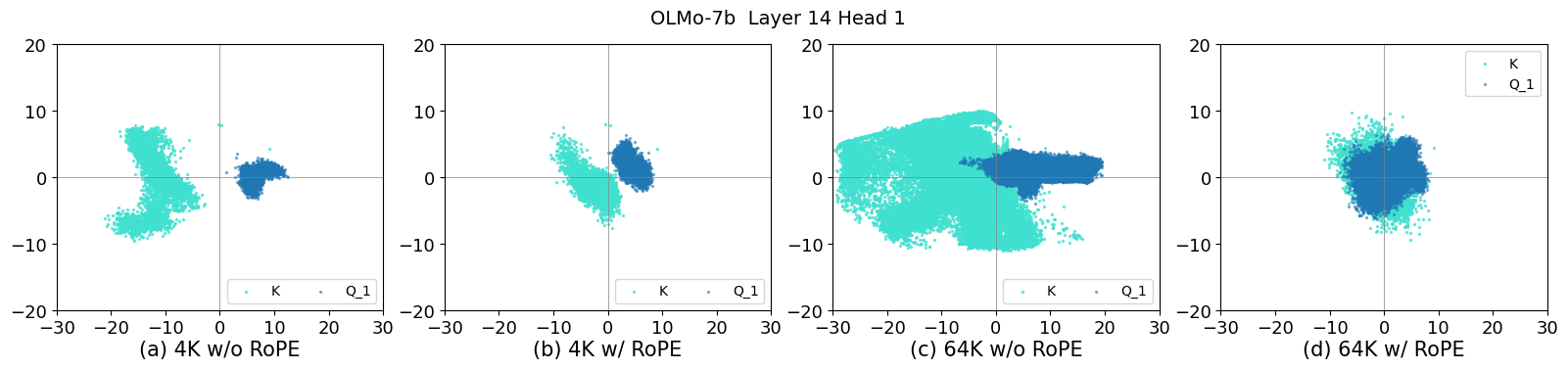}
\end{figure}

\vspace{-30pt}
\begin{figure}[h!]
    \centering
    \includegraphics[width=1.0\textwidth]{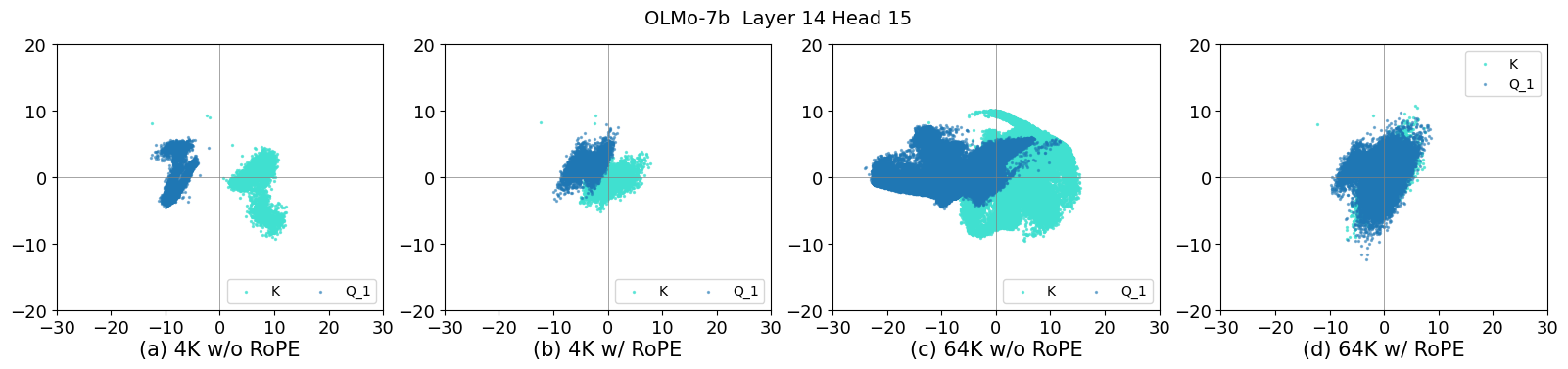}
\end{figure}

\vspace{-30pt}
\begin{figure}[h!]
    \centering
    \includegraphics[width=1.0\textwidth]{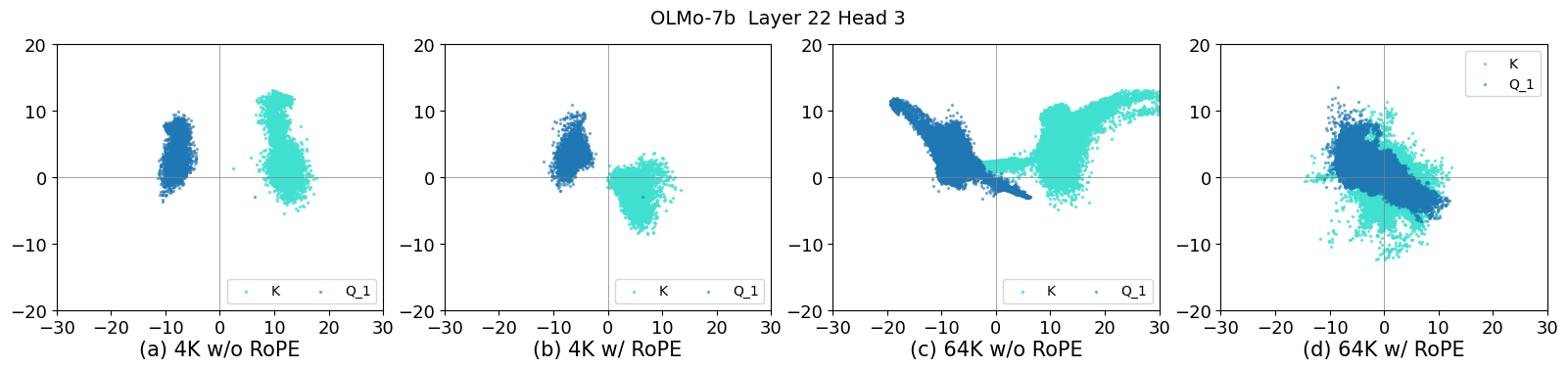}
\end{figure}
\vspace{-30pt}
\begin{figure}[h!]
    \centering
    \includegraphics[width=1.0\textwidth]{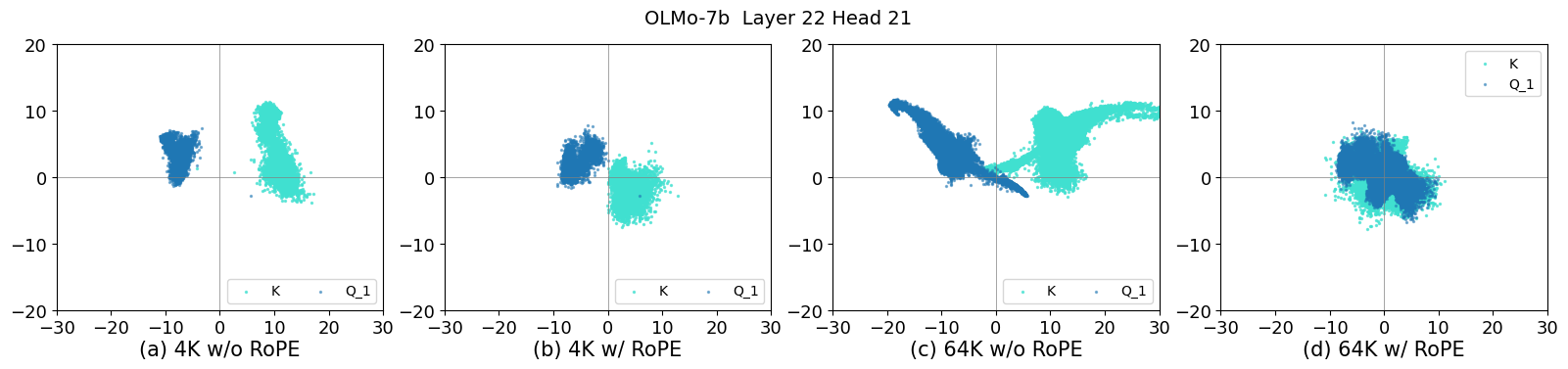}
\end{figure}

\vspace{-30pt}
\begin{figure}[h!]
    \centering
    \includegraphics[width=1.0\textwidth]{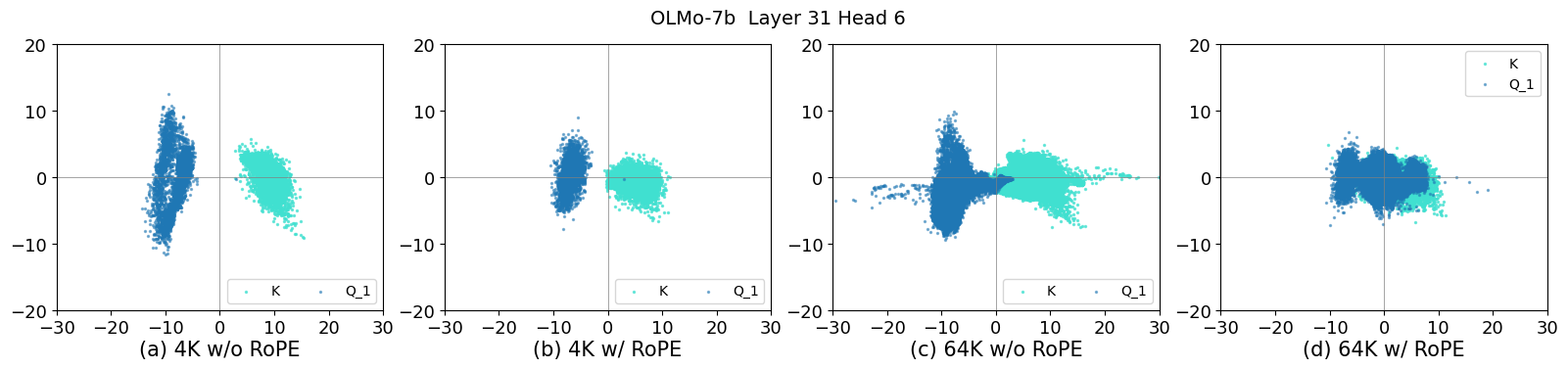}
\end{figure}
\vspace{-30pt}
\begin{figure}[h!]
    \centering
    \includegraphics[width=1.0\textwidth]{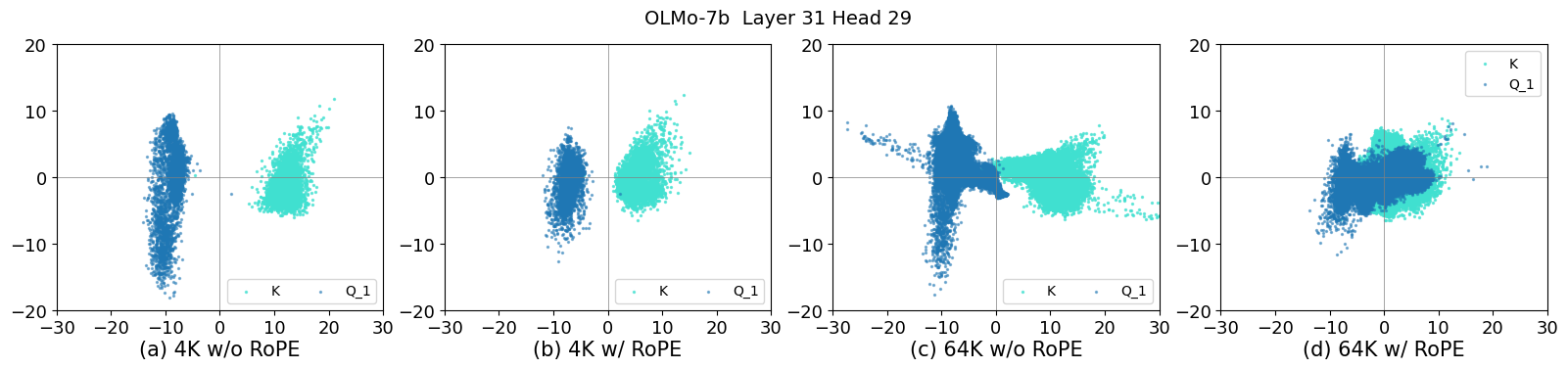}
\end{figure}

\end{document}